\renewcommand{\emptyset}{\varnothing}
\newtheorem{definition}{Definition}
\newtheorem{theorem}{Theorem}
\newtheorem{proposition}{Proposition}
\newtheorem{example}{Example}
\newtheorem{lemma}{Lemma}
\newenvironment{proof}[1][]{\begin{trivlist} \parindent = 0pt
\parskip = 1.5ex \item[\hskip \labelsep{\bf Proof #1}]}
{\hspace*{\fill} $\Box$ \end{trivlist}}
\title{An argumentation system for reasoning with \\
conflict-minimal paraconsistent $\mathcal{ALC}$}
\author{Wenzhao Qiao \and Nico Roos\\
Department of Knowledge Engineering, Maastricht University\\
Bouillonstraat 8-10, 6211 LH Maastricht, The Netherlands\\
\{wenzhao.qiao,roos\}@maastrichtuniversity.nl}
\begin{document}
\nocopyright
\maketitle
\begin{abstract}
The semantic web is an open and distributed environment in which
it is hard to guarantee consistency of knowledge and information.
Under the standard two-valued semantics everything is entailed if
knowledge and information is inconsistent. The semantics of the
paraconsistent logic LP offers a solution. However, if the
available knowledge and information is consistent, the set of
conclusions entailed under the three-valued semantics of the
paraconsistent logic LP is smaller than the set of conclusions
entailed under the two-valued semantics. Preferring
conflict-minimal three-valued interpretations eliminates this
difference.

Preferring conflict-minimal interpretations introduces
non-monotonicity. To handle the non-monotonicity, this paper
proposes an assumption-based argumentation system. Assumptions
needed to close branches of a semantic tableaux form the
arguments. Stable extensions of the set of derived arguments
correspond to conflict minimal interpretations and conclusions
entailed by all conflict-minimal interpretations are supported by
arguments in all stable extensions.
\end{abstract}

\section{Introduction} \label{sec:introduction}
In the semantic web, the description logics $\mathcal{SHOIN}(D)$
and $\mathcal{SROIQ}(D)$ are the standard for describing
ontologies using the TBox, and information using the Abox. Since
the semantic web is an open and distributed environment, knowledge
and information originating from different sources need not be
consistent. In case of inconsistencies, no useful conclusion can
be derived when using a standard two-valued semantics. Everything
is entailed because the set of two-valued interpretations is
empty. Resolving the inconsistencies is often not an option in an
open and distributed environment. Therefore, methods that allow us
to derive useful conclusions in the presence of inconsistencies
are preferred.

One possibility to draw useful conclusions from inconsistent
knowledge and information is by focussing on conclusions supported
by all maximally consistent subsets. This approach was first
proposed by Rescher \shortcite{Res-64} and was subsequence worked
out further by others \cite{Bre-89,Roo-88a,Roo-92}. A simple
implementation of this approach focusses on conclusions entailed
by the intersection of all maximally consistent subsets. Instead
of focussing on the intersection of all maximally consistent
subsets, one may also consider a single consistent subset for each
conclusion \cite{Poo-88,HHT-05}. For conclusions entailed by all
(preferred) maximally consistent subsets of the knowledge and
information, a more sophisticated approach is needed. An
argumentation system for this more general case has been described
by Roos \shortcite{
Roo-92}. Since these approaches need to identify consistent
subsets of knowledge and information, they are
\emph{non-monotonic}.

A second possibility for handling inconsistent knowledge and
information is by replacing the standard two-valued semantics by a
three-valued semantics such as the semantics of the paraconsistent
logic LP \cite{Pri-89}. An important advantage of this
paraconsistent logic over the maximally consistent subset approach
is that the entailment relation is \emph{monotonic}. A
disadvantage is that consistent knowledge and information entail
less conclusions when using the three-valued semantics than when
using the two-valued semantics. Conflict-minimal interpretations
reduce the gap between the sets of conclusions entailed by the two
semantics \cite{Pri-89,Pri-91}. Priest \shortcite{Pri-91} calls
resulting logic: LPm. The conflict-minimal interpretations also
makes LPm \emph{non-monotonic} \cite{Pri-91}.

In this paper we present an argumentation system for conclusions
entailed by conflict-minimal interpretations of the description
logic $\mathcal{ALC}$ \cite{SS-91} when using the semantics of the
paraconsistent logic LP. We focus on $\mathcal{ALC}$ instead of
the more expressive logics $\mathcal{SHOIN}(D)$ and
$\mathcal{SROIQ}(D)$ to keep the explanation simple. The described
approach can also be applied to more expressive description
logics.

The proposed approach starts from a semantic tableaux method for
the paraconsistent logic LP described by Bloesch
\shortcite{Blo-93}, which has been adapted to $\mathcal{ALC}$. The
semantic tableaux is used for deriving the entailed conclusions
when using the LP-semantics. If a tableaux cannot be closed, the
desired conclusion may still hold in all conflict-minimal
interpretations. The open tableaux enables us to identify
assumptions about conflict-minimality. These assumptions are used
to construct an \emph{assumption-based argumentation system},
which supports conclusions entailed by all conflict minimal
interpretations.

The remainder of the paper is organized as follows. First, we
describe $\mathcal{ALC}$, a three-valued semantics for
$\mathcal{ALC}$ based on the semantics of the paraconsistent logic
LP, and a corresponding semantic tableaux method. Second, we
describe how a semantic tableaux can be used to determine
arguments for conclusions supported by conflict-minimal
interpretations. Subsequently, we present the correctness and
completeness proof of the described approach. Next we describe
some related work. The last section summarizes the results and
points out directions of future work.

\section{Paraconsistent $\mathcal{ALC}$} \label{sec:paracon-logic}
\paragraph{The language of $\mathcal{ALC}$}
We first give the standard definitions of the language of
$\mathcal{ALC}$. We start with defining the set of concepts
$\mathcal{C}$ given the atomic concepts $\mathbf{C}$, the role
relations $\mathbf{R}$, the operators for constructing new
concepts $\neg$, $\sqcap$ and $\sqcup$, and the quantifiers
$\exists$ and $\forall$. Moreover, we introduce to special
concepts, $\top$ and $\bot$, which denote \emph{everything} and
\emph{nothing}, respectively.
\begin{definition}\label{def:standard-alc-concepts}
Let $\mathbf{C}$ be a set of atomic concepts and let $\mathbf{R}$
be a set of atomic roles.

The set of concepts $\mathcal{C}$ is recursively defined as
follows:
\begin{itemize}
\item
$\mathbf{C} \subseteq \mathcal{C}$; i.e. atomic concepts are
concepts.
\item
$\top \in \mathcal{C}$ and $\bot \in \mathcal{C}$.
\item
If $C \in \mathcal{C}$ and $D \in \mathcal{C}$, then $\neg C \in
\mathcal{C}$, $C \sqcap D \in \mathcal{C}$ and $C \sqcup D \in
\mathcal{C}$.
\item
If $C \in \mathcal{C}$ and $R \in \mathbf{R}$, then $\exists R.C
\in \mathcal{C}$ and $\forall R.C \in \mathcal{C}$.
\item
Nothing else belongs to $\mathcal{C}$.
\end{itemize}
\end{definition}

In the description logic $\mathcal{ALC}$, we have two operators:
$\sqsubseteq$ and $=$, for describing a relation between two
concepts:
\begin{definition}\label{def:standard-alc-relations}
If $\{ C, D \} \subseteq \mathcal{C}$, then we can formulate the
following relations (terminological definitions):
\begin{itemize}
\item
$C \sqsubseteq D$; i.e., $C$ is subsumed by $D$,
\item
$C = D$; i.e., $C$ is equal to $D$.
\end{itemize}
\end{definition}
A finite set $\mathcal{T}$ of terminological definitions is called
a \emph{TBox}.

In the description logic $\mathcal{ALC}$, we also have an operator
``$:$", for describing that an individual from the set of
individual names $\mathbf{N}$ is an instance of a concept, and
that a pair of individuals is an instance of a role.

\begin{definition}\label{def:standard-alc-assertions}
Let $\{ a, b \} \subseteq \mathbf{N}$ be two individuals, let $C
\in \mathcal{C}$ be a concept and let $R \in \mathbf{R}$ be a
role. Then assertions are defined as:
\begin{itemize}
\item
$a:C$
\item
$(a, b):R$
\end{itemize}
\end{definition}
A finite set $\mathcal{A}$ of assertions is called an \emph{ABox}.

A \emph{knowledge base} $\mathcal{K} = (\mathcal{T}, \mathcal{A})$
is a tuple consisting of a TBox $\mathcal{T}$ and an ABox
$\mathcal{A}$. In this paper we will denote elements of the TBox
and ABox $\mathcal{T} \cup \mathcal{A}$ as propositions.

We define a three-valued semantics for $\mathcal{ALC}$ which is
based on the semantics of the paraconsistent logic $LP$. We do not
use the notation $I = (\Delta,\cdot^I)$ that is often used for the
semantics of description logics. Instead we will use a notation
that is often used for predicate logic because it is more
convenient to describe projections and truth-values.
\begin{definition}\label{def:4-valued-interpretation}
A three-valued interpretation $I = \langle O, \pi \rangle$ is a
couple where $O$  is a non-empty set of objects and $\pi$ is an
interpretation function such that:
\begin{itemize}
\item
for each atomic concept $C \in \mathbf{C}$, $\pi(C) = \langle P, N
\rangle$ where $P, N \subseteq O$ are the positive and negative
instances of the concept $C$, respectively, and where $P\cup N=O$,
\item
for each individual $i \in \mathbf{N}$ it holds that $\pi(i) \in
O$, and
\item
for each atomic role $R \in \mathbf{R}$ it holds that $\pi(R)
\subseteq O \times O$.
\end{itemize}
We will use the projections $\pi(C)^+=P$ and $\pi(C)^-=N$ to
denote the positive and negative instances of a concept $C$,
respectively.
\end{definition}

We do not consider inconsistencies in roles since we cannot
formulate inconsistent roles in $\mathcal{ALC}$. In a more
expressive logic, such as $\mathcal{SROIQ}$, roles may become
inconsistent, for instance because we can specify disjoint roles.

Using the three-valued interpretations $I = \langle O, \pi
\rangle$, we define the interpretations of concepts in
$\mathcal{C}$.
\begin{definition}\label{def:4-valued-extended-interpretation}
The interpretation of a concept $C \in \mathcal{C}$ is defined by
the extended interpretation function $\pi^*$.
\begin{itemize}
\item
$\pi^*(C) = \pi(C)$ \textit{iff} $C \in \mathbf{C}$
\item
$\pi^*(\top) = \langle O, X \rangle$, where $X \subseteq O$
\item
$\pi^*(\bot) = \langle X, O \rangle$, where $X \subseteq O$
\item
$\pi^*(\neg C) = \langle \pi^*(C)^-,\pi^*(C)^+\rangle$
\item
$\pi^*(C \sqcap D) = \langle \pi^*(C)^+ \cap \pi^*(D)^+,
\pi^*(C)^- \cup \pi^*(D)^-\rangle$
\item
$\pi^*(C \sqcup D) = \langle \pi^*(C)^+ \cup \pi^*(D)^+,
\pi^*(C)^- \cap \pi^*(D)^-\rangle$
\item
$\pi^*(\exists R.C) =$ \\
$\langle \begin{array}[t]{l} \{ x \in O \mid
\exists y \in O, (x, y) \in \pi(R)~\text{and}~y \in \pi(C)^+ \},\\
\{ x \in O \mid \forall y \in O, (x, y) \in
\pi(R)~\text{implies}~y \in \pi(C)^- \}~\rangle\end{array}$
\item
$\pi^*(\forall R.C) =$ \\$ \langle \begin{array}[t]{l} \{ x \in O
\mid \forall y \in O, (x, y) \in \pi(R)~\text{implies}~y \in \pi(C)^+ \},\\
\{ x \in O \mid \exists y \in O, (x, y) \in \pi(R)~\text{and}~y
\in \pi(C)^- \}~\rangle \end{array}$
\end{itemize}
\end{definition}
Note that we allow inconsistencies in the concepts $\top$ and
$\bot$. There may not exist a tree-valued interpretation for a
knowledge-base $\mathcal{K} = (\mathcal{T}, \mathcal{A})$ if we
require that $X=\emptyset$. Consider for instance: $a:C$, $a:D$
and $C \sqcap D \sqsubseteq \bot$.

We also use the extended interpretation function $\pi^*$ to define
the truth values of the propositions: $C \sqsubseteq D$, $a:C$ and
$(a, b) : R$. The truth values of the three-valued semantics are
defined using sets of the ``classical" truth values: $t$ and $f$.
We use three sets in the LP-semantics: $\{t\}$, $\{f\}$ and
$\{t,f\}$, which correspond to \textsc{true}, \textsc{false} and
\textsc{conflict}.

\begin{definition}\label{def:4-valued-t-f}
Let $\{ a, b \} \subseteq \mathbf{N}$ be two individuals, let $C
\in \mathcal{C}$ be a concept and let $R \in \mathbf{R}$ be a
role. Then an interpretation $I = \langle O, \pi \rangle$ of
propositions is defined as:
\begin{itemize}
\item
$t \in \pi^*(a:C)$ \textit{iff} $\pi^*(a) \in \pi^*(C)^+$
\item
$f \in \pi^*(a:C)$ \textit{iff} $\pi^*(a) \in \pi^*(C)^-$
\item
$t \in \pi^*(C \sqsubseteq D)$ \textit{iff} 
$\pi^*(C)^+ \subseteq \pi^*(D)^+$,  and \\ 
\hspace*{26mm} $\pi^*(D)^- \subseteq \pi^*(C)^-$
\item
$f \in \pi^*(C \sqsubseteq D)$ \textit{iff} $t \not\in \pi^*(C
\sqsubseteq D)$
\item
$t \in \pi^*(C = D)$ \textit{iff} $\pi^*(C)^+ = \pi^*(D)^+$ and \\
\hspace*{26mm} $\pi^*(D)^- = \pi^*(C)^-$
\item
$f \in \pi^*(C = D)$ \textit{iff} $t \not\in \pi^*(C = D)$
\item
$t \in \pi^*((a, b) : R)$ \textit{iff} $(\pi^*(a), \pi^*(b)) \in
\pi(R)$
\item
$f \in \pi^*((a, b) : R)$ \textit{iff} $(\pi^*(a), \pi^*(b))
\not\in \pi(R)$
\end{itemize}
\end{definition}

The interpretation of the subsumption relation given above was
proposed by Patel-Schneider \shortcite{Pat-89} for their
four-valued semantics. Patel-Schneider's interpretation of the
subsumption relation does not correspond to the material
implication $\forall x [C(x) \to D(x)]$ in first-order logic. The
latter is equivalent to $\forall x [\neg C(x) \vee D(x)]$ under
the two-valued semantics, which corresponds to: ``for every $o\in
O$, $o \in \pi^*(C)^-$ or $o \in \pi^*(D)^+$'' under the
three-valued semantics. No conclusion can be drawn from $a:C$ and
$C \sqsubseteq D$ under the three-valued semantics since there
always exists an interpretation such that $\pi(a:C)=\{t,f\}$.

The entailment relation can be defined using the interpretations
of propositions.
\begin{definition}\label{def:4-valued-t-f}
Let $I = \langle O, \pi \rangle$ be an interpretation, let
$\varphi$ be a proposition, and let $\Sigma$ be a set of
propositions. The the entailment relation is defined as:
\begin{itemize}
\item
$I \models \varphi$ iff $t \in \pi^*(\varphi)$
\item
$I \models \Sigma$ iff $t \in \pi^*(\sigma)$ for every $\sigma\in
\Sigma$.
\item
$\Sigma \models \varphi$ iff $I \models \Sigma$ implies $I \models
\varphi$ for each interpretation $I$
\end{itemize}
\end{definition}

\paragraph{Semantic tableaux}
We use a semantic tableaux method that is based on the semantic
tableaux method for LP described by Bloesch \shortcite{Blo-93}.
This tableaux method will enable us to identify the assumptions
underlying relevant conflict minimal interpretations.

Bloesch proposes to label every proposition in the tableaux with
either the labels $\mathbb{T}$ (at least true), $\mathbb{F}$ (at
least false), or their complements $\overline{\mathbb{T}}$ and
$\overline{\mathbb{F}}$, respectively. So, $\mathbb{T} \varphi$
corresponds to $t \in \pi(\varphi)$, $\overline{\mathbb{T}}
\varphi$ corresponds to $t \not\in \pi(\varphi)$, $\mathbb{F}
\varphi$ corresponds to $f \in \pi(\varphi)$, and
$\overline{\mathbb{F}} \varphi$ corresponds to $f \not\in
\pi(\varphi)$.

Although we do not need it in the semantic tableaux, we also make
use of $\mathbb{C} \varphi$ and $\overline{\mathbb{C}} \varphi$,
which corresponds semantically with $\{t,f\} = \pi(\varphi)$ and
$\{t,f\} \not= \pi(\varphi)$, respectively. So, $\mathbb{C}
\varphi$ is equivalent to: `$\mathbb{T} \varphi$ \textit{and}
$\mathbb{F} \varphi$', and $\overline{\mathbb{C}} \varphi$ is
equivalent to: `$\overline{\mathbb{T}} \varphi$ \textit{or}
$\overline{\mathbb{F}} \varphi$'.

To prove that $\Sigma \models \varphi$ using Bloesch's tableaux
method \cite{Blo-93}, we have to show that a tableaux with root
$\Gamma = \{ \mathbb{T} \sigma \mid \sigma \in \Sigma \} \cup
\mathbb{\overline{T}} \varphi$ closes. The tableaux closes if
every branch has a node in which for some proposition $\alpha$ the
node contains: ``$\mathbb{T} \alpha$ \textit{and}
$\overline{\mathbb{T}} \alpha$'', or ``$\mathbb{F} \alpha$
\textit{and} $\overline{\mathbb{F}} \alpha$'', or
``$\overline{\mathbb{T}} \alpha$ \textit{and}
$\overline{\mathbb{F}} \alpha$''.

Based on Bloesch's semantic tableaux method for LP, the following
tableaux rules have been formulated. The soundness and
completeness of the set of rules are easy to prove.

\begin{center}
\begin{tabular}{cccc}
 $\displaystyle \frac{\mathbb{T} \ a:\neg C}{\mathbb{F} \ a:C} $
 & $\displaystyle \frac{\mathbb{\overline{T}} \ a:\neg C}{\mathbb{\overline{F}} \ a:C} $
 & $\displaystyle \frac{\mathbb{F} \ a:\neg C}{\mathbb{T} \ a:C} $
 & $\displaystyle \frac{\mathbb{\overline{F}} \ a:\neg C}{\mathbb{\overline{F}} \ a:C} $
\end{tabular}
\begin{tabular}{cc}
 $\displaystyle \frac{\mathbb{T} \ a:C \sqcap D}{\mathbb{T} \ a:C, \mathbb{T} \ a:D} $
 & $\displaystyle \frac{\mathbb{\overline{T}} \ a:C \sqcap D}
 {\mathbb{\overline{T}} \ a:C \mid \mathbb{\overline{T}} \ a:D} $
 \\
 $\displaystyle \frac{\mathbb{F} \ a:C \sqcap D}{\mathbb{F} \ a:C \mid \mathbb{F} \ a:D} $
 & $\displaystyle \frac{\mathbb{\overline{F}} \ a:C\sqcap D}
 {\mathbb{\overline{F}} \ a:C, \mathbb{\overline{F}} \ a:D} $
 \\
 $\displaystyle \frac{\mathbb{T} \ a:C \sqcup D}{\mathbb{T} \ a:C \mid \mathbb{T} \ a:D} $
 & $\displaystyle \frac{\mathbb{\overline{T}} \ a:C \sqcup D}
 {\mathbb{\overline{T}} \ a:C, \mathbb{\overline{T}} \ a:D} $
 \\
 $\displaystyle \frac{\mathbb{F} \ a:C \sqcup D}{\mathbb{F} \ a:C, \mathbb{F} \ a:D} $
 & $\displaystyle \frac{\mathbb{\overline{F}} \ a:C\sqcup D}
 {\mathbb{\overline{F}} \ a:C \mid \mathbb{\overline{F}} \ a:D} $
 \\
 $\displaystyle \frac{\mathbb{T} \ a:\exists r.C}{\mathbb{T} \ (a, x):r, \mathbb{T} \ x:C} $
 & $\displaystyle \frac{\mathbb{\overline{T}} \ a:\exists r.C, \mathbb{T} \ (a, b):r}
 {\mathbb{\overline{T}} \ b:C} $
\end{tabular}
\begin{tabular}{cc}
 $\displaystyle \frac{\mathbb{F} \ a:\exists r.C, \mathbb{T} \ (a, b):r}{\mathbb{F} \ b:C} $
 & $\displaystyle \frac{\mathbb{\overline{F}} \ a:\exists r.C}
 {\mathbb{T} \ (a, x):r, \mathbb{\overline{F}} \ x:C} $
 \\
 $\displaystyle \frac{\mathbb{T} \ a:\forall r.C, \mathbb{T} \ (a, b):r}{\mathbb{T} \ b:C} $
 & $\displaystyle \frac{\mathbb{\overline{T}} \ a:\forall r.C}
 {\mathbb{T} \ (a, x):r, \mathbb{\overline{T}} \ x:C} $
 \\
 $\displaystyle \frac{\mathbb{F} \ a:\forall r.C}{\mathbb{T} \ (a, b):r, \mathbb{F} \ b:C} $
 & $\displaystyle \frac{\mathbb{\overline{F}} \ a:\forall r.C, \mathbb{T} \ (a, b):r}
 {\mathbb{\overline{F}} \ b:C} $
\end{tabular}
\end{center}
The individual $a$ in the following tableaux rules for the
subsumption relation must be an existing individual name, while
the individual $x$ must be a new individual name.
\begin{center}
\begin{tabular}{cc}
$\displaystyle \frac{\mathbb{T} \ C \sqsubseteq
D}{\mathbb{\overline{T}} \ a:C \mid \mathbb{T} \ a:D} $
 & $\displaystyle \frac{\mathbb{T} \ C \sqsubseteq D}
 {\mathbb{\overline{F}} \ a:D \mid \mathbb{F} \ a:C} $ \\
\end{tabular}
\begin{tabular}{cc}
$\displaystyle \frac{\mathbb{\overline{T}} \ C \sqsubseteq
D}{\mathbb{T} \ x:C, \mathbb{\overline{T}} \ x:D \mid \mathbb{F} \
x:D, \mathbb{\overline{F}} \ x:C} $ \\
\end{tabular}
\begin{tabular}{cc}
$\displaystyle \frac{\mathbb{T} \ C = D}{\mathbb{T} \ C
\sqsubseteq D, \mathbb{T} \ D \sqsubseteq C} $
 & $\displaystyle \frac{\overline{\mathbb{T}} \ C = D}
 {\mathbb{\overline{T}} \ C \sqsubseteq D \mid
 \mathbb{\overline{T}} \ D \sqsubseteq C} $ \\
\end{tabular}
\end{center}

An important issue is guaranteeing that the constructed semantic
tableaux is always finite. The \emph{blocking} method described by
\cite{BDS-93,BBH-96} is used to guarantee the construction of a
finite tableaux. A rule that is \emph{blocked},  may not be not be
used in the construction of the tableaux.

\begin{definition}
\label{def:blocking-methods} Let $\Gamma$ be a node of the
tableau, and let $x$ and $y$ be two individual names. Moreover,
let $\Gamma(x) = \{ \mathbb{L} x : C \mid \mathbb{L} x : C \in
\Gamma \}$.
\begin{itemize}
\item
$x <_r y$ if $(x, y):R \in \Gamma$ for some $R \in \mathbf{R}$.
\item
$y$ is blocked if there is an individual name $x$ such that: $x
<_r^+ y$ and $\Gamma(y) \subseteq \Gamma(x)$, or $x <_r y$ and $x$
is blocked.
\end{itemize}
\end{definition}

\paragraph{Conflict Minimal Interpretations}
A price that we pay for changing to the three-valued LP-semantics
in order to handle inconsistencies is a reduction in the set of
entailed conclusions, even if the knowledge and information is
consistent.
\begin{example} \label{ex:1}
The set of propositions $\Sigma=\{a:\neg C,a:C \sqcup D\}$ does
not entail $a:D$ because there exists an interpretation $I=\langle
O,\pi \rangle$ for $\Sigma$ such that $\pi(a:C)=\{t,f\}$ and
$\pi(a:D)=\{f\}$.
\end{example}
Priest \shortcite{Pri-89,Pri-91} points out that more  useful
conclusions can be derived from the paraconsistent logic LP if we
would prefer conflict-minimal interpretations. The resulting logic
is LPm. Here we follow the same approach. First, we define a
conflict ordering on interpretations.

\begin{definition}\label{def:conflict-ordering}
Let $\mathbf{C}$ be a set of atomic concepts, let $\mathbf{N}$ be
a set of individual names, and let $I_1$ and $I_2$ be two
three-valued interpretations.

The interpretation $I_1$ contains less conflicts than the
interpretation $I_2$, denoted by $I_1 <_c I_2 $, iff:
\begin{center}
$\{ a:C \mid a \in \mathbf{N}, C \in \mathbf{C}, \pi_1(a:C) = \{t,f\} \} \subset$ \\
\hspace*{10mm} $\{ a:C \mid a \in \mathbf{N}, C \in \mathbf{C},
\pi_2(a:C) = \{t,f\} \}$
\end{center}
\end{definition}
The following example gives an illustration of a conflict ordering
for the set of propositions of Example \ref{ex:1}.
\begin{example}\label{ex:conflict-ordering}
Let $\Sigma=\{a:\neg C,a:C \sqcup D\}$ be a set of propositions
and let $I_1$, $I_2$, $I_3$, $I_4$ and $I_5$ be five
interpretations such that:
\begin{itemize}
\item
$\pi^*_1(a:C)=\{f\}$, $\pi^*_1(a:D)=\{t\}$,
\item
$\pi^*_2(a:C)=\{f\}$, $\pi^*_2(a:D)=\{t,f\}$.
\item
$\pi^*_3(a:C)=\{t,f\}$, $\pi^*_3(a:D)=\{t\}$,
\item
$\pi^*_4(a:C)=\{t,f\}$, $\pi^*_4(a:D)=\{f\}$,
\item
$\pi^*_5(a:C)=\{t,f\}$, $\pi^*_5(a:D)=\{t,f\}$.
\end{itemize}

Then $I_1 <_c I_2$, $I_1 <_c I_3$, $I_1 <_c I_4$, $I_1 <_c I_5$,
$I_2 <_c I_5$, $I_3 <_c I_5$  and $I_4 <_c I_5$.
\end{example}

Using the conflict ordering, we define the conflict minimal
interpretations.
\begin{definition}\label{def:conflict-minimal-interpretation}
Let $I_1$ be a three-valued interpretation and let $\Sigma$ be a
set of propositions.

$I_1$ is a conflict minimal interpretation of $\Sigma$, denoted by
$I_1 \models_{<_c} \Sigma$, iff $I_1 \models \Sigma$ and for no
interpretation $I_2$ such that $I_2 <_c I_1$, $I_2 \models \Sigma$
holds.
\end{definition}
In Example \ref{ex:conflict-ordering}, $I_1$ is the only
conflict-minimal interpretation.

The conflict-minimal entailment of a proposition by a set of
propositions can now be defined.
\begin{definition}\label{def:conflict-minimal-entailment}
Let $\Sigma = (\mathcal{T} \cup \mathcal{A})$ be a set of
propositions and let $\varphi$ be a proposition.

$\Sigma$ entails conflict-minimally the proposition $\varphi$,
denoted by $\Sigma \models_{<_c} \varphi$, iff for every
interpretation $I$, if $I \models_{<_c} \Sigma$, then $I \models
\varphi$.
\end{definition}
The conflict-minimal interpretations in Example
\ref{ex:conflict-ordering} entail the conclusion $a:D$.

\paragraph{The subsumption relation}
The conflict-minimal interpretations enables us to use an
interpretation of the subsumption relation based on the material
implication.
\begin{itemize}
\item
For every $o\in O$, $o \in \pi^*(C)^-$ or $o \in \pi^*(D)^+$
\end{itemize}
This semantics of the subsumption relation resolves a problem with
the semantics of Patel-Schneider \shortcite{Pat-89}. Under
Patel-Schneider's semantics, $\{ a:C,a:\neg C, C \sqsubseteq D\}$
entails $a:D$. This entailment is undesirable if information about
$a:C$ is contradictory.

The tableaux rules of the new interpretation are:
\begin{center}
\begin{tabular}{cc}
 $\displaystyle \frac{\mathbb{T} \ C \sqsubseteq D}
 {\mathbb{F} \ a:C \mid \mathbb{T} \ a:D}$
 & $\displaystyle \frac{\mathbb{\overline{T}} \ C \sqsubseteq D}
 {\mathbb{T} \ a:C, \mathbb{F} \ a:D}$
\end{tabular}
\end{center}

\section{Arguments for conclusions supported
by conflict minimal interpretations}\label{sec:reasoning}

The conflict-minimal interpretations of a knowledge base entail
more useful conclusions.  Unfortunately, focusing on conclusions
supported by conflict-minimal interpretations makes the reasoning
process \emph{non-monotonic}. Adding the assertion $a:\neg D$ to
the set of propositions in Example \ref{ex:conflict-ordering}
eliminates interpretations $I_1$ and $I_3$, which includes the
only conflict-minimal interpretation $I_1$. The interpretations
$I_2$ and $I_4$ are the new conflict-minimal interpretations.
Unlike the original conflict-minimal interpretation $I_1$, the new
conflict-minimal interpretations $I_2$ and $I_4$ do not entail
$a:D$.

Deriving conclusions supported by the conflict-minimal
interpretations is problematic because of the non-monotonicity.
The modern way to deal with non-monotonicity is by giving an
argument supporting a conclusion, and subsequently verifying
whether there are no counter-arguments \cite{Dun-95}. Here we will
follow this argumentation-based approach.

We propose an approach for deriving arguments that uses the
semantic tableaux method for our paraconsistent logic as a
starting point. The approach is based on the observation that an
interpretation satisfying the root of a semantic tableaux will
also satisfy one of the leafs. Now suppose that the only leafs of
a tableaux that are not closed; i.e., leaf in which we do not have
``$\mathbb{T} \alpha$ \textit{and} $\overline{\mathbb{T}}
\alpha$'' or ``$\mathbb{F} \alpha$ \textit{and}
$\overline{\mathbb{F}} \alpha$'' or ``$\overline{\mathbb{T}}
\alpha$ \textit{and} $\overline{\mathbb{F}} \alpha$'', are leafs
in which ``$\mathbb{T} \alpha$ \textit{and} $\mathbb{F} \alpha$''
holds for some proposition $\alpha$. So, in every open branch of
the tableaux, $\mathbb{C} \alpha$ holds for some proposition
$\alpha$. If we can assume that there are no conflicts w.r.t.~each
proposition $\alpha$ in the conflict-minimal interpretations, then
we can also close the open branches. The set of assumptions
$\overline{\mathbb{C}} \alpha$, equivalent to
``$\overline{\mathbb{T}} \alpha$ \textit{or}
$\overline{\mathbb{F}} \alpha$'', that we need to close the open
branches, will be used as the argument for the conclusion
supported by the semantic tableaux.

An advantage of the proposed approach is that there is no need to
consider arguments if a conclusion already holds without
considering conflict-minimal interpretations.

A branch that can be closed \emph{assuming} that the
conflict-minimal interpretations contain \textbf{no} conflicts
with respect to the proposition $\alpha$; i.e., assuming
$\overline{\mathbb{C}} \alpha$, will be called a \emph{weakly
closed} branch. We will call a tableaux \emph{weakly closed} if
some branches are weakly closed and all other branches are closed.
If we can (weakly) close a tableaux for $\Gamma = \{ \mathbb{T}
\sigma \mid \sigma \in (\mathcal{T} \cup \mathcal{A}) \} \cup
\mathbb{\overline{T}} \varphi$, we consider the set of assumptions
$\overline{\mathbb{C}} \alpha$ needed to weakly close the
tableaux, to be the argument supporting $\Sigma \models_{\leq_c}
\varphi$. Example \ref{ex:tableaux1} gives an illustration.

\begin{example} \label{ex:tableaux1}
Let $\Sigma=\{a:\neg C,a:C \sqcup D\}$ be a set of propositions.
To verify whether $a:D$ holds, we may construct the following
tableaux:

\Tree
[.{$\mathbb{T} \ a:\neg C$ \\
$\mathbb{T} \ a:C \sqcup D$ \\
\fbox{$\mathbb{\overline{T}} \ a:D$}}
   [.{$\mathbb{F} \ a:C$}
      [.{$\mathbb{T} \ a:C$}
         [.{$\otimes_{[a:C]}$}
         ]
      ]
      [.{$\mathbb{T} \ a:D$}
         [.{$\times$}
         ]
      ]
   ]
]

\noindent Only the left branch is \emph{weakly closed} in this
tableau. We assume that the assertion $a:C$ will \emph{not} be
assigned \textsc{conflict} in any conflict-minimal interpretation.
That is, we assume that $\overline{\mathbb{C}} \ a:C$ holds.
\end{example}

In the following definition of an argument, we consider arguments
for $\mathbb{T}\varphi$ and $\mathbb{F}\varphi$.
\begin{definition}
Let $\Sigma$ be set of propositions and let $\varphi$ a
proposition. Moreover, let $\mathscr{T}$ be a (weakly) closed
semantic tableaux with root $\Gamma = \{ \mathbb{T} \sigma \mid
\sigma \in \Sigma \} \cup \mathbb{\overline{L}} \varphi$ and
$\mathbb{L} \in \{ \mathbb{T},\mathbb{F} \}$. Finally, let $\{
\overline{\mathbb{C}} \alpha_1, \ldots, \overline{\mathbb{C}}
\alpha_k \}$ be the set of assumptions on which the closures of
weakly closed branches are based.

Then $A=(\{ \overline{\mathbb{C}} \alpha_1, \ldots,
\overline{\mathbb{C}} \alpha_k \}, \mathbb{L}\varphi)$ is an
argument.
\end{definition}

The next step is to verify whether the assumptions:
$\overline{\mathbb{C}} \alpha_i$ are valid. If one of the
assumptions does not hold, we have a \emph{counter-argument} for
our argument supporting $\Sigma \models_{\leq_c} \varphi$. To
verify the correctness of an assumption, we add the assumption to
$\Sigma$. Since an assumption $\overline{\mathbb{C}} \alpha$ is
equivalent to: ``$\overline{\mathbb{T}} \alpha$ \textit{or}
$\overline{\mathbb{F}} \alpha$'', we can consider
$\overline{\mathbb{T}} \alpha$ and $\overline{\mathbb{F}} \alpha$
separately. Example \ref{ex:tableaux2} gives an illustration for
the assumption $\overline{\mathbb{C}} \ a:C$ used in Example
\ref{ex:tableaux1}.

\begin{example} \label{ex:tableaux2}
Let $\Sigma=\{a:\neg C,a:C \sqcup D\}$ be a set of propositions.
To verify whether the assumption $\overline{\mathbb{C}} \ a:C$
holds in every conflict minimal interpretation, we may construct a
tableaux assuming $\overline{\mathbb{T}} \ a:C$ and a tableaux
assuming $\overline{\mathbb{F}} \ a:C$:

\centerline{
\begin{tabular}{ccc}
\Tree
[.{$\mathbb{T} \ a:\neg C$ \\
$\mathbb{T} \ a:C \sqcup D$ \\
\fbox{$\mathbb{\overline{T}} \ a:C$}}
   [.{$\mathbb{F} \ a:C$}
     [.{$\mathbb{T} \ a:C$}
         [.{$\times$}
         ]
      ]
      [.{$\mathbb{T} \ a:D$}
      ]
    ]
] && \Tree
[.{$\mathbb{T} \ a:\neg C$ \\
$\mathbb{T} \ a:C \sqcup D$ \\
\fbox{$\mathbb{\overline{F}} \ a:C$}}
   [.{$\mathbb{F} \ a:C$}
      [.{$\times$}
      ]
   ]
]
\end{tabular} }
\noindent The right branch of the first tableaux cannot be closed.
Therefore, the assumption $\mathbb{\overline{T}} \ a:C$ is valid,
implying that the assumption $\mathbb{\overline{C}} \ a:C$ is also
valid. Hence, there exists \emph{no} counter-argument.
\end{example}

Since the validity of assumptions must be verified with respect to
conflict-minimal interpretations, assumptions may also be used in
the counter-arguments. This implies that we may have to verify
whether there exists a counter-argument for a counter-argument.
Example \ref{ex:tableaux3} gives an illustration.

\begin{example} \label{ex:tableaux3}
Let $\Sigma = \{ a:\neg C, a:C \sqcup D, a:\neg D \sqcup E, a:\neg
E \}$ be a set of propositions. To verify whether $a:D$ holds, we
may construct the following tableaux:

\Tree
[.{$\mathbb{T} \ a:\neg C$ \\
$\mathbb{T} \ a:C \sqcup D$ \\
$\mathbb{T} \ a:\neg D \sqcup E$ \\
$\mathbb{T} \ a:\neg E$ \\
\fbox{$\mathbb{\overline{T}} \ a:D$}}
   [.{$\mathbb{T} \ a:C$}
      [.{$\mathbb{F} \ a:C$}
         [.{$\otimes_{[a:C]}$}
         ]
      ]
   ]
   [.{$\mathbb{T} \ a:D$}
      [.{$\times$}
      ]
   ]
]

This weakly closed tableaux implies the argument $A_0 = (\{
\overline{\mathbb{C}} \ a:C \}, \mathbb{T} \ a:D )$. Next, we have
to verify whether there exists a counter-argument for $A_0$. To
verify the existence of a counter-argument, we construct two
tableaux, one for $\mathbb{\overline{T}} \ a:C$ and one for
$\mathbb{\overline{F}} \ a:C$. As we can see below, both tableaux
are (weakly)-closed, and therefore form the counter-argument $A_1
= (\{ \overline{\mathbb{C}} \ a:D, \overline{\mathbb{C}} \ a:E \},
\mathbb{C} \ a:C)$. We say that the argument $A_1$ \emph{attacks}
the argument $A_0$ because the former is a counter-argument of the
latter.

\centerline{
\begin{tabular}{ccc}
\Tree
[.{$\mathbb{T} \ a:\neg C$ \\
$\mathbb{T} \ a:C \sqcup D$ \\
$\mathbb{T} \ a:\neg D \sqcup E$ \\
$\mathbb{T} \ a:\neg E$ \\
\fbox{$\mathbb{\overline{T}} \ a:C$}}
   [.{$\mathbb{T} \ a:\neg D$}
      [.{$\mathbb{T} \ a:C$}
            [.{$\times$}
         ]
      ]
      [.{$\mathbb{T} \ a:D$}
         [.{$\mathbb{F} \ a:D$}
            [.{$\otimes_{[a:D]}$}
            ]
         ]
      ]
   ]
   [.{$\mathbb{T} \ a:E$}
      [.{$\mathbb{F} \ a:E$}
         [.{$\otimes_{[a:E]}$}
         ]
      ]
   ]
]
 &&
\Tree
[.{$\mathbb{T} \ a:\neg C$ \\
$\mathbb{T} \ a:C \sqcup D$ \\
$\mathbb{T} \ a:\neg D \sqcup E$ \\
$\mathbb{T} \ a:\neg E$ \\
\fbox{$\mathbb{\overline{F}} \ a:C$}}
   [.{$\mathbb{F} \ a:C$}
      [.{$\times$}
      ]
   ]
]
\end{tabular}
}

The two tableaux forming the counter-argument $A_1$ are closed
under the assumptions: $\overline{\mathbb{C}} \ a:D$ and
${\overline{\mathbb{C}} \ a:E}$. So, $A_1$ is a valid argument if
there exists no valid counter-argument for $\mathbb{C} \ a:D$, and
no counter-argument for $\mathbb{C} \ a:E$.

Argument $A_1$ is equivalent to two other arguments, namely: $A_2
= (\{ \overline{\mathbb{C}} \ a:C, \overline{\mathbb{C}} \ a:E \},
\mathbb{C} \ a:D )$ and $A_3 = (\{ \overline{\mathbb{C}} \ a:C,
\overline{\mathbb{C}} \ a:D \}, \mathbb{C} \ a:E)$. A proof of the
equivalence will be given in the next section, Proposition
\ref{prop:counter-arg}.

The arguments $A_2$ and $A_3$ implied by $A_1$ are both
counter-arguments of $A_1$. Moreover, $A_1$ is a counter-argument
of $A_2$ and $A_3$, and $A_2$ and $A_3$ are counter-arguments of
each other. No other counter-arguments can be identified in this
example. Figure \ref{fig:attack} show all the arguments and the
attack relation, denoted by the arrows, between the arguments.
\end{example}

\begin{figure}[htb]
\centerline{
\includegraphics[scale=1.0]{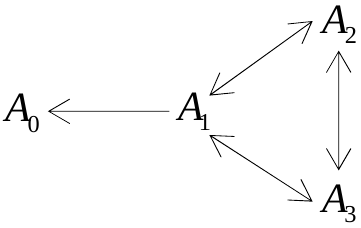}
} \caption{The attack relations between the arguments of Example
\ref{ex:tableaux3}.} \label{fig:attack}
\end{figure}

We will now formally define the arguments and the attack relations
that we can derive from the constructed semantic tableaux.
\begin{definition}
Let $\Sigma$ be set of propositions and let $\overline{\mathbb{C}}
\alpha =$``$\overline{\mathbb{T}} \alpha$ \textit{or}
$\overline{\mathbb{F}} \alpha$'' be an assumption in the argument
$A$. Moreover, let $\mathscr{T}_1$ be a (weakly) closed semantic
tableaux with root $\Gamma_1 = \{ \mathbb{T} \sigma \mid \sigma
\in \Sigma \} \cup \mathbb{\overline{T}} \alpha$ and let
$\mathscr{T}_2$ be a (weakly) closed semantic tableaux with root
$\Gamma_2 = \{ \mathbb{T} \sigma \mid \sigma \in \Sigma \} \cup
\mathbb{\overline{F}} \alpha$. Finally, let $\{
\overline{\mathbb{C}} \alpha_1, \ldots, \overline{\mathbb{C}}
\alpha_k \}$ be the set of assumptions on which the weakly closed
branches in the tableaux $\mathscr{T}_1$ or the tableaux
$\mathscr{T}_2$ are based.

Then $A'=(\{ \overline{\mathbb{C}} \alpha_1, \ldots,
\overline{\mathbb{C}} \alpha_k \}, \mathbb{C} \alpha)$ is a
counter-argument of the argument $A$. We say that the argument
$A'$ \emph{attacks} the argument $A$, denoted by: $A'
\longrightarrow A$.
\end{definition}

The form of argumentation that we have here is called
assumption-based argumentation (ABA), which has been developed
since the end of the 1980's
\cite{BDKT-97,BTK-93,DKT-09,GT-07,Roo-88a,Roo-92}.

Example \ref{ex:tableaux3} shows that an argument can be
counter-argument of an argument and vice versa; e.g., arguments
$A_2$ and $A_3$. This raises the question which arguments are
valid. Argumentation theory and especially the argumentation
framework (AF) introduced by Dung \shortcite{Dun-95} provides an
answer.

Arguments are viewed in an argumentation framework as atoms over
which an attack relation is defined. Figure \ref{fig:attack} shows
the arguments and the attack relations between the arguments
forming the argumentation framework of Example \ref{ex:tableaux3}.
The formal specification of an argumentation framework is given by
the next definition.

\begin{definition}
An argumentation framework is a couple $AF= (
\mathscr{A},\longrightarrow )$ where $\mathscr{A}$ is a finite set
of arguments and $\longrightarrow \subseteq \mathscr{A} \times
\mathscr{A}$ is an attack relation over the arguments.
\end{definition}
For convenience, we extend the attack relation $\longrightarrow$
to sets of arguments.
\begin{definition}
Let $A\in \mathscr{A}$ be an argument and let $\mathscr{S},
\mathscr{P} \subseteq \mathscr{A}$ be two sets of arguments. We
define:
\begin{itemize}
\item
$\mathscr{S} \longrightarrow A$ iff for some $B \in \mathscr{S}$,
$B \longrightarrow A$.
\item
$A \longrightarrow \mathscr{S}$ iff for some $B \in \mathscr{S}$,
$A \longrightarrow B$.
\item
$\mathscr{S} \longrightarrow \mathscr{P}$ iff for some $B \in
\mathscr{S}$ and $C \in \mathscr{P}$, $B \longrightarrow C$.
\end{itemize}
\end{definition}

Dung \shortcite{Dun-95} describes different argumentation
semantics for an argumentation framework in terms of sets of
acceptable arguments. These semantics are based on the idea of
selecting a coherent subset $\mathscr{E}$ of the set of arguments
$\mathscr{A}$ of the argumentation framework $AF= (
\mathscr{A},\longrightarrow )$. Such a set of arguments
$\mathscr{E}$ is called an \emph{argument extension}. The
arguments of an argument extension support propositions that give
a coherent description of what might hold in the world. Clearly, a
basic requirement of an argument extension is being
\emph{conflict-free}; i.e., no argument in an argument extension
attacks another argument in the argument extension. Besides being
conflict-free, an argument extension should defend itself against
attacking arguments by attacking the attacker.

\begin{definition}
Let $AF= ( \mathscr{A},\longrightarrow )$ be an argumentation
framework and let $\mathscr{S} \subseteq \mathscr{A}$ be a set of
arguments.
\begin{itemize}
\item
$\mathscr{S}$ is \emph{conflict-free} iff $\mathscr{S}
\centernot\longrightarrow \mathscr{S}$.
\item
$\mathscr{S}$ defends an argument $A \in \mathscr{A}$ iff for
every argument $B \in \mathscr{A}$ such that $B \longrightarrow
A$, $\mathscr{S} \longrightarrow B$.
\end{itemize}
\end{definition}

Not every conflict-free set of arguments that defends itself, is
considered to be an argument extension. Several additional
requirements have been formulated by Dung \shortcite{Dun-95},
resulting in three different semantics: the \emph{stable}, the
\emph{preferred} and the \emph{grounded} semantics.

\begin{definition}
Let $AF= ( \mathscr{A},\longrightarrow )$ be an argumentation
framework and let $\mathscr{E} \subseteq \mathscr{A}$.
\begin{itemize}
\item
$\mathscr{E}$ is a \emph{stable extension} iff $\mathscr{E}$ is
conflict-free,
and for every argument $A \in (\mathscr{A} \setminus
\mathscr{E})$, $\mathscr{E} \longrightarrow A$; i.e.,
$\mathscr{E}$ defends itself against every possible attack by
arguments in $\mathscr{A} \backslash \mathscr{E}$.
\item
$\mathscr{E}$ is a \emph{preferred extension} iff $\mathscr{E}$ is
maximal (w.r.t. $\subseteq$) set of arguments that (1) is
conflict-free,
and (2) $\mathscr{E}$ defends every argument $A \in \mathscr{E}$.
\item
$\mathscr{E}$ is a \emph{grounded extension} iff $\mathscr{E}$ is
the minimal (w.r.t. $\subseteq$) set of arguments that (1) is
conflict-free, (2) defends every argument $A \in \mathscr{E}$, and
(3) contains all arguments in $\mathscr{A}$ it defends.
\end{itemize}
\end{definition}

We are interested in stable semantics. We will show in the next
section that stable extensions correspond to conflict-minimal
interpretations. More specifically, we will prove that a
conclusion supported by an argument in every stable extension, is
entailed by every conflict-minimal interpretation, and vice versa.


Is it possible that a conclusion is supported by a different
argument in every stable extension? The answer is Yes, as is
illustrated by Example \ref{ex:tableaux4}. In this example we have
two arguments supporting the conclusion $a:E$, namely $A_0$ and
$A_1$. As can be seen in Figure \ref{fig:attack2}, there are two
stable extensions of the argumentation framework. One extension
contains the argument $A_0$ and the other contains the argument
$A_1$. So, in every extension there is an argument supporting the
conclusion $a:E$. Hence, $\Sigma \models_{\leq_c} a:E$.

\begin{example} \label{ex:tableaux4}
Let $\Sigma = \{ a:\neg C, a:C \sqcup D, a:\neg D, a:C \sqcup E,
a:D \sqcup E \}$ be a set of propositions. The following two
tableaux imply the two arguments $A_0  = (\{ \overline{\mathbb{C}}
\ a:C \}, \mathbb{T} \ a:E)$ and $A_1 = (\{ \overline{\mathbb{C}}
\ a:D \}, \mathbb{T} \ a:E)$, both supporting the conclusion
$a:E$:

\begin{center}
\begin{tabular}{ccc}
\Tree
[.{$\mathbb{T} \ a:\neg C$ \\
$\mathbb{T} \ a:C \sqcup D$ \\
$\mathbb{T} \ a:\neg D$ \\
$\mathbb{T} \ a:C \sqcup E$ \\
$\mathbb{T} \ a:D \sqcup E$ \\
\fbox{$\mathbb{\overline{T}} \ a:E$}}
   [.{$\mathbb{F} \ a:C$}
         [.{$\mathbb{T} \ a:C$}
            [.{$\otimes_{[a:C]}$}
            ]
         ]
         [.{$\mathbb{T} \ a:E$}
            [.{$\times$}
            ]
         ]
   ]
]

& &

\Tree
[.{$\mathbb{T} \ a:\neg C$ \\
$\mathbb{T} \ a:C \sqcup D$ \\
$\mathbb{T} \ a:\neg D$ \\
$\mathbb{T} \ a:C \sqcup E$ \\
$\mathbb{T} \ a:D \sqcup E$ \\
\fbox{$\mathbb{\overline{T}} \ a:E$}}
      [.{$\mathbb{F} \ a:D$}
         [.{$\mathbb{T} \ a:D$}
            [.{$\otimes_{[a:D]}$}
            ]
         ]
         [.{$\mathbb{T} \ a:E$}
            [.{$\times$}
            ]
         ]
      ]
]
\end{tabular}
\end{center}

The assumption $\overline{\mathbb{C}} \ a:C$ in argument $A_0$
makes it possible to determine a counter-argument $A_2 = (\{
\overline{\mathbb{C}} \ a:D \}, \mathbb{C} \ a:C)$ using of the
following two tableaux:

\begin{center}
\begin{tabular}{ccc}
\Tree
[.{$\mathbb{T} \ a:\neg C$ \\
$\mathbb{T} \ a:C \sqcup D$ \\
$\mathbb{T} \ a:\neg D$ \\
$\mathbb{T} \ a:C \sqcup E$ \\
$\mathbb{T} \ a:D \sqcup E$ \\
\fbox{$\mathbb{\overline{T}} \ a:C$}}
      [.{$\mathbb{F} \ a:D$}
         [.{$\mathbb{T} \ a:C$}
            [.{$\times$}
            ]
         ]
         [.{$\mathbb{T} \ a:D$}
            [.{$\otimes_{[a:D]}$}
            ]
         ]
      ]
]

& &

\Tree
[.{$\mathbb{T} \ a:\neg C$ \\
$\mathbb{T} \ a:C \sqcup D$ \\
$\mathbb{T} \ a:\neg D$ \\
$\mathbb{T} \ a:C \sqcup E$ \\
$\mathbb{T} \ a:D \sqcup E$ \\
\fbox{$\mathbb{\overline{F}} \ a:C$}}
   [.{$\mathbb{F} \ a:C$}
      [.{$\times$}
      ]
   ]
]
\end{tabular}
\end{center}

According to Proposition \ref{prop:counter-arg}, $A_2$ implies the
counter-argument $A_3 = (\{ \overline{\mathbb{C}} \ a:C \},
\mathbb{C} \ a:D )$ of $A_1$ and $A_2$. $A_2$ is also a
counter-argument of $A_3$. Figure \ref{fig:attack2} shows the
attack relations between the arguments $A_0$, $A_1$, $A_2$ and
$A_3$.
\end{example}

\begin{figure}[htb]
\centerline{
\includegraphics[scale=1.0]{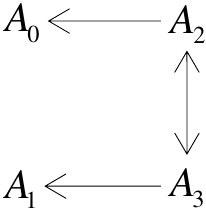}
} \caption{The attack relations between the arguments of Example
\ref{ex:tableaux4}.} \label{fig:attack2}
\end{figure}

Example \ref{ex:interpretation} gives an illustration of the
semantic interpretations of Example \ref{ex:tableaux4}. The
example shows two conflict-minimal interpretations. These
conflict-minimal interpretations correspond with the two
\emph{stable extensions}. Interpretation $I_1$ entails $a:E$
because $I_1$ must entail $a:C \sqcup E$ and $I_1$ does not entail
$a:C$, and interpretation $I_2$ entails $a:E$ because $I_2$ must
entail $a:D \sqcup E$ and $I_2$ does not entail $a:D$.


\begin{example} \label{ex:interpretation}
Let $\Sigma = \{ a:\neg C, a:C \sqcup D, a:\neg D, a:C \sqcup E,
a:D \sqcup E \}$ be a set of propositions. There are two
conflict-minimal interpretations containing the following
interpretation functions:
\begin{itemize}
\item
$\pi_1(a:C)=\{f\}$, $\pi_1(a:D)=\{t,f\}$, $\pi_1(a:E)=\{t\}$.
\item
$\pi_2(a:C)=\{t,f\}$, $\pi_2(a:D)=\{f\}$, $\pi_2(a:E)=\{t\}$.
\end{itemize}
In both interpretations $a:E$ is entailed.
\end{example}


\section{Correctness and completeness proofs}
In this section we investigate whether the proposed approach is
correct. That is whether a proposition supported by an argument in
every stable extension is entailed by every conflict-minimal
interpretation. Moreover, we investigate whether the approach is
complete. That is, whether a proposition entailed by every
conflict-minimal interpretation is supported by an argument in
every stable extension.

In the following theorem we will use the notion of ``a complete
set of arguments relevant to $\varphi$''. This set of arguments
$\mathscr{A}$ consists of all argument $A$ supporting $\varphi$,
all possible counter-arguments, all possible counter arguments of
the counter-arguments, etc.
\begin{definition}
A complete set of arguments $\mathscr{A}$ relevant to $\varphi$
satisfies the following requirements:
\begin{itemize}
\item
$\{ A \mid A \mbox{ supports } \varphi \} \subseteq \mathscr{A}$.
\item
If $A \in \mathscr{A}$ and $B$ is a counter-argument of $A$ that
we can derive, then $B \in \mathscr{A}$ and $(B, A) \in \
\longrightarrow$.
\end{itemize}
\end{definition}

\begin{theorem}[correctness and completeness] \label{th:correctness-AF}
Let $\Sigma$ be a set of propositions and let $\varphi$ be a
proposition. Moreover, let $\mathscr{A}$ be a complete set of
arguments relevant to $\varphi$, let $\longrightarrow \subseteq
\mathscr{A} \times \mathscr{A}$ be the attack relation determined
by $\mathscr{A}$, and let \mbox{$(\mathscr{A},\longrightarrow)$}
be the argumentation framework. Finally, let
$\mathscr{E}_1,\ldots,\mathscr{E}_k$ be all stable extensions of
the argumentation framework $(\mathscr{A}, \longrightarrow)$.

$\Sigma$ entails the proposition $\varphi$ using the
conflict-minimal three-valued semantics; i.e., $\Sigma
\models_{\leq_c} \varphi$, iff $\varphi$ is supported by an
argument in every stable extension $\mathscr{E}_i$ of
$(\mathscr{A}, \longrightarrow)$.
\end{theorem}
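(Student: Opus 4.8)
The plan is to prove the two directions of the biconditional by establishing a tight correspondence between stable extensions of the argumentation framework and conflict-minimal interpretations of $\Sigma$. The central bridge is a pair of lemmas (which I would state and prove before the theorem, or inline): first, a \emph{soundness} lemma saying that if a tableaux with root $\{\mathbb{T}\sigma\mid\sigma\in\Sigma\}\cup\mathbb{\overline{L}}\varphi$ is weakly closed using assumptions $\{\overline{\mathbb{C}}\alpha_1,\ldots,\overline{\mathbb{C}}\alpha_k\}$, then every interpretation $I\models\Sigma$ that additionally satisfies all the $\overline{\mathbb{C}}\alpha_i$ must satisfy $\mathbb{L}\varphi$; this follows from the standard tableaux invariant that any interpretation satisfying a node satisfies one of its children, applied to the extended rule set, together with the observation that a weakly closed leaf containing $\mathbb{C}\alpha$ is excluded once $\overline{\mathbb{C}}\alpha$ is assumed. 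Second, a \emph{completeness} lemma for the tableaux: if $\Sigma\cup\{\overline{\mathbb{C}}\alpha_i\}_i\models\varphi$ then there is a weakly closed tableaux witnessing it whose assumption set is contained in $\{\overline{\mathbb{C}}\alpha_i\}_i$ — essentially the completeness of Bloesch's method relativised to the added assumptions, using the blocking argument for finiteness.

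Next I would make precise the map from interpretations to argument sets and back. Given a conflict-minimal interpretation $I\models_{<_c}\Sigma$, define $\mathscr{E}_I$ to be the set of arguments $A=(\{\overline{\mathbb{C}}\alpha_i\}_i,\mathbb{L}\psi)\in\mathscr{A}$ all of whose assumptions hold in $I$ (i.e.\ $\pi(\alpha_i)\neq\{t,f\}$). I would then show: (i) $\mathscr{E}_I$ is conflict-free — if $A'\longrightarrow A$ with $A'$ having conclusion $\mathbb{C}\alpha$ and $A'\in\mathscr{E}_I$, then by the soundness lemma $I\models\mathbb{C}\alpha$, contradicting that $\overline{\mathbb{C}}\alpha$ is an assumption of $A$ and $A\in\mathscr{E}_I$; (ii) $\mathscr{E}_I$ attacks every argument outside it — if $A\notin\mathscr{E}_I$ then some assumption $\overline{\mathbb{C}}\alpha$ of $A$ fails in $I$, so $I\models\mathbb{C}\alpha$; here I use conflict-minimality to argue that this conflict is ``forced'', i.e.\ $\Sigma$ together with the non-conflict assumptions licensed by $I$ entails $\mathbb{C}\alpha$, which by the completeness lemma yields an argument with conclusion $\mathbb{C}\alpha$ whose assumptions all hold in $I$, hence lies in $\mathscr{E}_I$ and attacks $A$. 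Thus $\mathscr{E}_I$ is a stable extension. Conversely, given a stable extension $\mathscr{E}$, I would define an interpretation $I_\mathscr{E}$ by setting $\pi(a:C)=\{t,f\}$ exactly when no argument in $\mathscr{E}$ has $\overline{\mathbb{C}}\,a{:}C$ among its assumptions is too crude; instead, collect the set $\Delta_\mathscr{E}=\{\overline{\mathbb{C}}\alpha\mid \overline{\mathbb{C}}\alpha$ appears in some $A\in\mathscr{E}\}$, show it is consistent with $\Sigma$ (using conflict-freeness and the soundness lemma to rule out a derivation of some $\mathbb{C}\alpha$ with $\overline{\mathbb{C}}\alpha\in\Delta_\mathscr{E}$), take a conflict-minimal $I$ satisfying $\Sigma\cup\Delta_\mathscr{E}$, and verify via stability that $\mathscr{E}_I=\mathscr{E}$.

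With the correspondence in hand the theorem is routine. For the ``only if'' direction: suppose $\Sigma\models_{\leq_c}\varphi$ and let $\mathscr{E}_i$ be any stable extension; it equals $\mathscr{E}_{I}$ for the associated conflict-minimal $I$, and since $I\models\varphi$, the witnessing weakly closed tableaux (completeness lemma, with assumptions drawn from the non-conflicts of $I$) yields an argument supporting $\varphi$ all of whose assumptions hold in $I$, hence lying in $\mathscr{E}_i$. For the ``if'' direction: suppose every stable extension contains an argument supporting $\varphi$, and let $I\models_{<_c}\Sigma$ be arbitrary; then $\mathscr{E}_I$ is a stable extension, so it contains an argument $A$ supporting $\varphi$, and since all of $A$'s assumptions hold in $I$, the soundness lemma gives $I\models\varphi$; as $I$ was arbitrary, $\Sigma\models_{\leq_c}\varphi$.

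The main obstacle I expect is the completeness lemma's use of conflict-minimality in step (ii): it is not enough that $I\models\mathbb{C}\alpha$, one needs that this conflict is \emph{entailed} (so that a tableaux can derive $\mathbb{C}\alpha$) rather than merely \emph{permitted} by $I$. The argument must be that, were $\mathbb{C}\alpha$ not forced by $\Sigma$ together with $I$'s other non-conflicts, one could flip $\alpha$ to non-conflict and obtain an interpretation strictly below $I$ in $<_c$ still satisfying $\Sigma$ — contradicting minimality; making the ``flip'' well-defined (it may cascade, changing the status of $\exists/\forall$ concepts and of derived individuals introduced by blocking) is the delicate point, and I would handle it by working with the set of \emph{atomic} conflicts only (as in Definition~\ref{def:conflict-ordering}) and appealing to a model-surgery argument that keeps $\Sigma$ satisfied while removing one atomic conflict. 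A secondary subtlety is verifying that the finitely many arguments in a complete set $\mathscr{A}$ suffice — i.e.\ that restricting attention to $\mathscr{A}$ rather than all conceivable arguments does not change which propositions are ``supported in every stable extension'' — which follows from the closure condition in the definition of a complete set together with the fact that attacks only ever introduce conclusions of the form $\mathbb{C}\alpha$ for subformulas $\alpha$ occurring in the tableaux, of which there are finitely many up to blocking.
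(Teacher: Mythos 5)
Your proposal is essentially correct and shares most of its skeleton with the paper's proof: your soundness and completeness lemmas are exactly the paper's Lemmas \ref{lem:argument-soundness} and \ref{lem:argument-completeness}, your map $I \mapsto \mathscr{E}_I$ (arguments all of whose assumptions hold in $I$) together with the proof that it is conflict-free and stable is precisely Lemma \ref{lem:stable-semantics} (with your ``forcing'' step (ii) being the content of Lemma \ref{lem:cf-int-argument}), and your $(\Leftarrow)$ direction is the paper's verbatim. The genuine divergence is in the $(\Rightarrow)$ direction. You establish the \emph{converse} correspondence --- every stable extension $\mathscr{E}$ equals $\mathscr{E}_I$ for some conflict-minimal $I$, obtained as a conflict-minimal model of $\Sigma \cup \Delta_{\mathscr{E}}$ --- and then read off an argument for $\varphi$ inside $\mathscr{E}_I$ from the entailment $\Sigma \cup \{\overline{\mathbb{C}}\,a{:}C \mid \pi_I(a{:}C) \neq \{t,f\}\} \models \mathbb{T}\varphi$, which holds because any model of that set has the same atomic conflicts as $I$ by minimality and is hence itself conflict-minimal. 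The paper never proves this converse correspondence; it instead argues by contradiction directly from the completeness lemma applied to $\Omega(\mathscr{E}_i)$. Your route is more symmetric and arguably more illuminating, but it obliges you to discharge two claims you only sketch: that $\Sigma \cup \Delta_{\mathscr{E}}$ has a model at all (this follows from stability: an entailed $\mathbb{C}\alpha$ with $\overline{\mathbb{C}}\alpha \in \Delta_{\mathscr{E}}$ would, via the completeness lemma and the closure condition on $\mathscr{A}$, yield an attacker of $\mathscr{E}$ that $\mathscr{E}$ can only counter by attacking itself), and that a conflict-minimal model of $\Sigma \cup \Delta_{\mathscr{E}}$ is conflict-minimal for $\Sigma$ alone (this needs the atomicity of the assumptions, so that $I' <_c I$ preserves every $\overline{\mathbb{C}}\,a{:}C$ satisfied by $I$). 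Both are fillable, and you correctly isolate the two real delicacies of the whole construction --- that conflicts in a minimal model are \emph{entailed} rather than merely permitted, and that the assumption set produced by the completeness lemma stays inside the prescribed set --- the latter being a refinement the paper itself uses tacitly.
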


To prove Theorem \ref{th:correctness-AF}, we need the following
lemmas. In these lemmas we will use the following notations: We
will use $I \models \mathbb{T} \alpha$ to denote that $t \in
I(\alpha)$ ( $I \models \alpha$ ), and $I \models \mathbb{F}
\alpha$ to denote that $f \in I(\alpha)$. Moreover, we will use
$\Sigma \models \mathbb{T} \alpha$ and $\Sigma \models \mathbb{F}
\alpha$ to denote that $\mathbb{T} \alpha$ and $\mathbb{F}
\alpha$, respectively, hold in all three-valued interpretations of
$\Sigma$.

The first lemma proves the correctness of the arguments in
$\mathscr{A}$.


\begin{lemma}[correctness of arguments] \label{lem:argument-soundness}
Let $\Sigma$ be a set of propositions and let $\varphi$ be a
proposition. Moreover, let $\mathbb{L}$ be either the label
$\mathbb{T}$ or $\mathbb{F}$.

If a semantic tableaux with root $\Gamma = \{ \mathbb{T} \sigma
\mid \sigma \in \Sigma \} \cup \{ \overline{\mathbb{L}} \varphi
\}$ is weakly closed, and if $\{ \overline{\mathbb{C}} \alpha_1,
\ldots, \overline{\mathbb{C}} \alpha_k \}$ is the set of weak
closure assumptions implied by all the weakly closed leafs of the
tableaux, then
\[ \{ \overline{\mathbb{C}} \alpha_1, \ldots, \overline{\mathbb{C}}
\alpha_k \} \cup \{ \mathbb{T} \sigma \mid \sigma \in \Sigma \}
\models\mathbb{L} \varphi \]
\end{lemma}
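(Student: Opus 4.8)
The plan is to prove the lemma by induction on the structure of the (weakly closed) semantic tableaux, exploiting the standard invariant of tableaux soundness: if an interpretation satisfies all labelled formulas at a node, it satisfies all labelled formulas along at least one branch through its children. First I would set up the semantic reading of the labels precisely, using the dictionary already given in the text: $I \models \mathbb{T}\alpha$ iff $t \in \pi^*(\alpha)$, $I \models \overline{\mathbb{T}}\alpha$ iff $t \notin \pi^*(\alpha)$, and similarly for $\mathbb{F},\overline{\mathbb{F}}$; and $I \models \overline{\mathbb{C}}\alpha$ iff $\pi^*(\alpha) \neq \{t,f\}$, i.e. $I \models \overline{\mathbb{T}}\alpha$ or $I \models \overline{\mathbb{F}}\alpha$. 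I would then fix an arbitrary interpretation $I$ with $I \models \{\overline{\mathbb{C}}\alpha_1,\dots,\overline{\mathbb{C}}\alpha_k\} \cup \{\mathbb{T}\sigma \mid \sigma \in \Sigma\}$ and aim to derive a contradiction from the assumption $I \not\models \mathbb{L}\varphi$, i.e. from $I \models \overline{\mathbb{L}}\varphi$ — which means $I$ satisfies the root $\Gamma$.

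Second, I would establish the key branch-following claim: if $I$ satisfies every labelled formula in a node $\Gamma'$, then for every tableaux rule applied at $\Gamma'$ there is a child $\Gamma''$ such that $I$ satisfies every labelled formula in $\Gamma''$. This is a rule-by-rule verification against Definitions~\ref{def:4-valued-extended-interpretation} and the $t/f$ membership conditions; each case is routine (e.g.\ the $\mathbb{\overline{T}}\ a{:}C\sqcap D$ rule splits into $\mathbb{\overline{T}}\ a{:}C \mid \mathbb{\overline{T}}\ a{:}D$, and $t\notin\pi^*(C\sqcap D)^+$ forces $t\notin\pi^*(C)^+$ or $t\notin\pi^*(D)^+$). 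For the existential/role-introduction rules one must note that the fresh individual $x$ can be interpreted as a witnessing object in $O$, so $I$ can be extended (without changing $\pi$ on the old names) to satisfy the child; since the lemma's conclusion only constrains $I$ on propositions over $\Sigma$ and the $\alpha_i$, this extension is harmless. Blocking only restricts which rules are applied, so it does not affect the claim. Iterating the claim from the root, $I$ satisfies every labelled formula along some full branch $B$, down to its leaf $\lambda$.

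Third, I would analyse the leaf $\lambda$. Since the tableaux is weakly closed, $\lambda$ is either closed or weakly closed. If $\lambda$ is closed it contains a clash — $\mathbb{T}\alpha$ and $\overline{\mathbb{T}}\alpha$, or $\mathbb{F}\alpha$ and $\overline{\mathbb{F}}\alpha$, or $\overline{\mathbb{T}}\alpha$ and $\overline{\mathbb{F}}\alpha$ — each of which is unsatisfiable by any interpretation (the last because $\pi^*(C)^+ \cup \pi^*(C)^- = O$ for atomic $C$, and by the interpretation clauses this propagates so that $t\notin\pi^*(\alpha)$ and $f\notin\pi^*(\alpha)$ cannot both hold); this contradicts $I \models \lambda$. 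If $\lambda$ is weakly closed, it contains $\mathbb{T}\alpha_i$ and $\mathbb{F}\alpha_i$ for one of the closure propositions $\alpha_i$, i.e.\ $I \models \mathbb{T}\alpha_i$ and $I \models \mathbb{F}\alpha_i$, so $\pi^*(\alpha_i) = \{t,f\}$; but our hypothesis on $I$ includes $\overline{\mathbb{C}}\alpha_i$, i.e.\ $\pi^*(\alpha_i) \neq \{t,f\}$ — again a contradiction. Either way $I \models \overline{\mathbb{L}}\varphi$ is impossible, so $I \models \mathbb{L}\varphi$, which is exactly the claimed entailment.

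I expect the main obstacle to be the branch-following step in the presence of the role-introduction rules and blocking: one must be careful that "$I$ satisfies the child node" is maintained when fresh individuals are introduced, which requires either working with interpretations that already interpret countably many spare names or explicitly extending $I$ on the new names while keeping $\pi$ fixed on $\mathbf{N}$ and on all atomic concepts/roles occurring in $\Sigma$ — and then observing that the final contradiction is drawn only from labelled formulas over the original vocabulary, so the extension does not weaken the conclusion. The remaining per-rule checks are mechanical consequences of Definition~\ref{def:4-valued-extended-interpretation}, and the leaf analysis is immediate from the semantics of the labels together with the totality condition $P \cup N = O$ on atomic concepts.
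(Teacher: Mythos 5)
Your proof is correct and follows essentially the same route as the paper's: argue by contraposition, use the branch-following soundness invariant to push a satisfying interpretation from the root down to some leaf, and then derive a contradiction by cases on whether that leaf is strongly closed (clash unsatisfiable) or weakly closed (it contains $\mathbb{T}\alpha_i$ and $\mathbb{F}\alpha_i$, contradicting the assumed $\overline{\mathbb{C}}\alpha_i$). The only difference is presentational --- the paper adds the assumptions $\overline{\mathbb{C}}\alpha_i$ to every node and clashes inside the augmented leaf, whereas you keep them as hypotheses on $I$ --- and you are in fact more explicit than the paper about the rule-by-rule verification, the fresh-individual rules, and blocking.
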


\begin{proof}
\begin{itemize}
\item[]
Suppose that $\{ \overline{\mathbb{C}} \alpha_1, \ldots,
\overline{\mathbb{C}} \alpha_k \} \cup \{ \mathbb{T} \sigma \mid
\sigma \in \Sigma \} \not\models\mathbb{L} \varphi$. Then there
must be an interpretation $I$ satisfying $\{ \overline{\mathbb{C}}
\alpha_1, \ldots, \overline{\mathbb{C}} \alpha_k \} \cup \{
\mathbb{T} \sigma \mid \sigma \in \Sigma\}$ but not $\mathbb{L}
\varphi$. So, $I \models\{ \overline{\mathbb{C}} \alpha_1, \ldots,
\overline{\mathbb{C}} \alpha_k \} \cup \{ \mathbb{T} \sigma \mid
\sigma \in \Sigma \} \cup \{ \overline{\mathbb{L}} \varphi \}$. We
can create a tableaux for $\{ \overline{\mathbb{C}} \alpha_1,
\ldots, \overline{\mathbb{C}} \alpha_k \} \cup \{ \mathbb{T}
\sigma \mid \sigma \in \Sigma \} \cup \{ \overline{\mathbb{L}}
\varphi \}$ by adding the assumptions $\{ \overline{\mathbb{C}}
\alpha_1, \ldots, \overline{\mathbb{C}} \alpha_k \}$ to every node
in the original tableaux with root $\Gamma$. Let $\Gamma^* = \{
\overline{\mathbb{C}} \alpha_1, \ldots, \overline{\mathbb{C}}
\alpha_k \} \cup \{ \mathbb{T} \sigma \mid \sigma \in \Sigma \}
\cup \{ \overline{\mathbb{L}} \varphi \}$ be the root of the
resulting tableaux. Since $I \models\Gamma^*$, there must be a
leaf $\Lambda^*$ of the new tableaux and $I \models\Lambda^*$. The
corresponding leaf $\Lambda$ in the original tableaux with root
$\Gamma$ is either strongly or weakly closed.
\begin{itemize}
\item
If $\Lambda$ is strongly closed, then so is $\Lambda^*$ and we
have a contradiction.
\item
If $\Lambda$ is weakly closed, then the weak closure  implies one
of the assumptions $\overline{\mathbb{C}} \alpha_i$ because
$\{\mathbb{T} \alpha_i, \mathbb{F} \alpha_i \} \subseteq \Lambda$.
Therefore, $\{\mathbb{T} \alpha_i, \mathbb{F} \alpha_i \}
\subseteq \Lambda^*$. Since $\{\mathbb{T} \alpha_i, \mathbb{F}
\alpha_i \}$ implies $\mathbb{C} \alpha_i$ and since
$\overline{\mathbb{C}} \alpha_i \in \Lambda^*$, $I
\not\models\Lambda^*$ The latter contradicts with $I
\models\Lambda^*$.
\end{itemize}
\end{itemize}
Hence, the lemma holds.
\end{proof}
The above lemma implies that the assumptions of an argument $A =
(\{ \overline{\mathbb{C}} \alpha_1, \ldots, \overline{\mathbb{C}}
\alpha_k \}, \mathbb{L} \varphi)$ together with $\Sigma$ entail
the conclusion of $A$.


The next lemma proves the completeness of the set of arguments
$\mathscr{A}$.

\begin{lemma}[completeness of arguments] \label{lem:argument-completeness}
Let $\Sigma$ be a set of propositions and let $\varphi$ be a
proposition. Moreover, let $\mathbb{L}$ be either the label
$\mathbb{T}$ or $\mathbb{F}$.

If $\{ \overline{\mathbb{C}} \alpha_1, \ldots,
\overline{\mathbb{C}} \alpha_k \}$ is a set of  atomic assumptions
with $\alpha_i = a_i:C_i$, $a_i \in \mathbf{N}$ and $C_i \in
\mathbf{C_i}$, and if
\[ \{ \overline{\mathbb{C}} \alpha_1, \ldots, \overline{\mathbb{C}}
\alpha_k \} \cup \{ \mathbb{T} \sigma \mid \sigma \in \Sigma \}
\models\mathbb{L} \varphi \]
then there is a semantic tableaux with root $\Gamma = \{
\mathbb{T} \sigma \mid \sigma \in \Sigma \} \cup \{
\overline{\mathbb{L}} \varphi \}$, and the tableaux is weakly
closed.
\end{lemma}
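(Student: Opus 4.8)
The plan is to argue by contraposition on the tableaux side: assume that the tableaux with root $\Gamma = \{ \mathbb{T} \sigma \mid \sigma \in \Sigma \} \cup \{ \overline{\mathbb{L}} \varphi \}$ is \emph{not} weakly closed, and derive that $\{ \overline{\mathbb{C}} \alpha_1, \ldots, \overline{\mathbb{C}} \alpha_k \} \cup \{ \mathbb{T} \sigma \mid \sigma \in \Sigma \} \not\models \mathbb{L} \varphi$. Concretely, if the tableaux is not weakly closed, then after exhaustively applying the tableaux rules (using blocking to keep it finite, Definition \ref{def:blocking-methods}) there is some leaf $\Lambda$ that is neither strongly closed nor weakly closed; i.e.\ $\Lambda$ contains no pair ``$\mathbb{T}\alpha$ and $\overline{\mathbb{T}}\alpha$'', no ``$\mathbb{F}\alpha$ and $\overline{\mathbb{F}}\alpha$'', no ``$\overline{\mathbb{T}}\alpha$ and $\overline{\mathbb{F}}\alpha$'', and moreover no pair ``$\mathbb{T}\alpha$ and $\mathbb{F}\alpha$'' with $\alpha$ one of the $\alpha_i$ (otherwise the branch would weakly close on $\overline{\mathbb{C}}\alpha_i$). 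The first step is therefore to invoke the completeness of the underlying LP-tableaux calculus — stated as easy just after the tableaux rules — to extract from such an open, non-weakly-closed leaf $\Lambda$ a three-valued model $I = \langle O, \pi \rangle$ satisfying every labelled proposition in $\Lambda$, hence in particular $I \models \{ \mathbb{T}\sigma \mid \sigma \in \Sigma \}$ and $I \models \overline{\mathbb{L}}\varphi$, so $I \not\models \mathbb{L}\varphi$.

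The second, and main, step is to massage this model $I$ so that it \emph{also} satisfies all the assumptions $\overline{\mathbb{C}}\alpha_i$, i.e.\ $\pi^*(a_i{:}C_i) \ne \{t,f\}$ for each $i$. The point is that because the $\alpha_i = a_i{:}C_i$ are \emph{atomic} assertions (with $C_i \in \mathbf{C}$), and because $\Lambda$ did not weakly close on any $\alpha_i$, the branch never forced \emph{both} $\mathbb{T}\,a_i{:}C_i$ and $\mathbb{F}\,a_i{:}C_i$. So at most one of the two ``polarities'' of $a_i{:}C_i$ is demanded by $\Lambda$; we may then, when building $I$ from the open branch, simply decline to put $\pi^*(a_i)$ into the component of $\pi(C_i)$ that is \emph{not} demanded — except that Definition \ref{def:4-valued-interpretation} requires $P \cup N = O$, so if neither polarity is demanded we still have to assign exactly one of $t,f$, which is harmless, and if one polarity is demanded we assign precisely that one. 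This yields $\pi^*(a_i{:}C_i) \in \{\{t\},\{f\}\}$, never $\{t,f\}$, so $I \models \overline{\mathbb{C}}\alpha_i$ for all $i$. Combined with $I \models \{\mathbb{T}\sigma \mid \sigma\in\Sigma\}$ and $I \not\models \mathbb{L}\varphi$ from the first step, this gives $\{ \overline{\mathbb{C}} \alpha_1, \ldots, \overline{\mathbb{C}} \alpha_k \} \cup \{ \mathbb{T} \sigma \mid \sigma \in \Sigma \} \not\models \mathbb{L} \varphi$, which is the contrapositive of the claim.

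The delicate point — the one I would spend the most care on — is precisely the interaction in the previous paragraph between ``the branch does not demand polarity $p$ of $a_i{:}C_i$'' and the totality constraint $P \cup N = O$ on atomic concepts. One has to check that the standard model-extraction map from an open, saturated branch of a three-valued tableaux can always be taken to set $\pi(C_i)$ minimally on the object $\pi^*(a_i)$: include $\pi^*(a_i)$ in $\pi(C_i)^+$ iff $\mathbb{T}\,a_i{:}C_i \in \Lambda$, in $\pi(C_i)^-$ iff $\mathbb{F}\,a_i{:}C_i \in \Lambda$, and if \emph{neither} holds, put it (say) in $\pi(C_i)^+$ only. Because $C_i$ is atomic, this choice does not propagate inconsistency to any compound concept via the clauses of Definition \ref{def:4-valued-extended-interpretation} in a way that could reintroduce $\mathbb{C}\alpha_i$; and because the branch is saturated and not (weakly) closed, this assignment is consistent with all other labelled formulas on $\Lambda$. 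A secondary routine check is that blocking does not interfere: a blocked individual $y$ has $\Gamma(y) \subseteq \Gamma(x)$ for some earlier $x$, so the usual ``loop back'' construction of $O$ and $\pi(R)$ still produces a genuine model satisfying $\Lambda$, and the assumptions $\alpha_i$ involve only named individuals from $\mathbf{N}$, which are never themselves blocked. Once these two verifications are in place, the lemma follows.
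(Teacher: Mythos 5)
Your proof is correct, and it establishes the same contrapositive with the same key insight as the paper --- namely that because the $\alpha_i$ are atomic, no tableau rule can decompose them, so the only way the assumptions $\overline{\mathbb{C}}\alpha_i$ can matter is through a pair $\mathbb{T}\alpha_i,\mathbb{F}\alpha_i$ already present on a leaf --- but you discharge the final step differently. The paper stays inside the calculus: it adds the assumptions $\overline{\mathbb{C}}\alpha_1,\ldots,\overline{\mathbb{C}}\alpha_k$ to every node of the saturated tableau, observes that the augmented root is unsatisfiable by hypothesis, concludes (via the asserted completeness of the rule set) that the augmented leaf $\Lambda^*$ must be strongly closed, and since $\Lambda$ itself was open and the added formulas are atomic, the closure can only come from $\{\mathbb{T}\alpha_i,\mathbb{F}\alpha_i\}\subseteq\Lambda$, contradicting that $\Lambda$ is not weakly closed. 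You instead exit to semantics: you extract a countermodel $I$ from the open saturated leaf and then tune the interpretation of the atomic concepts $C_i$ at the named individuals $a_i$ so that $I\models\overline{\mathbb{C}}\alpha_i$ as well, refuting the entailment directly. Both arguments ultimately rest on the same model-existence property for open saturated branches that the paper only asserts (``easy to prove''), so neither is more self-contained than the other; what your version buys is that it makes explicit the one genuinely delicate point --- that the totality constraint $P\cup N=O$ in Definition~\ref{def:4-valued-interpretation} still permits a conflict-free assignment to $a_i{:}C_i$ whenever the leaf does not demand both polarities --- which the paper's phrasing (``$\Lambda^*$ must be strongly closed and for some $\alpha_i$, $\{\mathbb{T}\alpha_i,\mathbb{F}\alpha_i\}\subseteq\Lambda^*$'') silently relies on. The price is that you must also verify, as you note, that blocking does not disturb the model extraction, a point the paper's purely syntactic route avoids having to mention.
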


\begin{proof}
Let $\Gamma = \{ \mathbb{T} \sigma \mid \sigma \in \Sigma \} \cup
\{ \overline{\mathbb{L}} \varphi \}$ be the root of a semantic
tableaux.
\begin{itemize}
\item[]
Suppose that the tableaux is \emph{not} weakly closed. Then there
is an open leaf $\Lambda$. We can create a tableaux for $\{
\overline{\mathbb{C}} \alpha_1, \ldots, \overline{\mathbb{C}}
\alpha_k \} \cup \{ \mathbb{T} \sigma \mid \sigma \in \Sigma \}
\cup \{ \overline{\mathbb{L}} \varphi \}$ by adding the
assumptions $\{ \overline{\mathbb{C}} \alpha_1, \ldots,
\overline{\mathbb{C}} \alpha_k \}$ to every node in the original
tableaux with root $\Gamma$. Let $\Gamma^* = \{
\overline{\mathbb{C}} \alpha_1, \ldots, \overline{\mathbb{C}}
\alpha_k \} \cup \{ \mathbb{T} \sigma \mid \sigma \in \Sigma \}
\cup \{ \overline{\mathbb{L}} \varphi \}$ be the root of the
resulting tableaux. Since $\{ \overline{\mathbb{C}} \alpha_1,
\ldots, \overline{\mathbb{C}} \alpha_k \} \cup \{ \mathbb{T}
\sigma \mid \sigma \in \Sigma \} \models\mathbb{L} \varphi$, there
exists no interpretation $I$ such that $I \models\Gamma^*$.
Therefore, there exists no interpretation $I$ such that $I
\models\Lambda^*$. Since we considered only atomic assumptions
$\overline{\mathbb{C}} \alpha_i$, we cannot extend the tableaux by
rewriting a proposition in $\Lambda^*$. Therefore, $\Lambda^*$
must be strongly closed and for some $\alpha_i$, $\{ \mathbb{T}
\alpha_i, \mathbb{F} \alpha_i \} \subseteq \Lambda^*$. This
implies that $\{ \mathbb{T} \alpha_i, \mathbb{F} \alpha_i \}
\subseteq \Lambda$. Hence, $\Lambda$ is weakly closed under the
assumption $\overline{\mathbb{C}} \alpha_i$. Contradiction.
\end{itemize}
Hence, the lemma holds.
\end{proof}
The above lemma implies that we can find an argument $A = (\{
\overline{\mathbb{C}} \alpha_1, \ldots, \overline{\mathbb{C}}
\alpha_k \}, \mathbb{L} \varphi)$ for any set of assumption that,
together with $\Sigma$, entails a conclusion $\mathbb{L} \varphi$.


The following lemma proves that for every conflict $\mathbb{C}
\varphi$ entailed by a conflict-minimal interpretation, we can
find an argument supporting $\mathbb{C} \varphi$ of which the
assumptions are entailed by the conflict-minimal interpretation.
\begin{lemma} \label{lem:cf-int-argument}
Let $\Sigma$ be a set of propositions and let $I=\langle O,\pi
\rangle$ be a conflict-minimal interpretation of $\Sigma$.
Moreover, let $\varphi$ be a proposition.

If $I \models \mathbb{C} \varphi$ holds, then there is an argument
$A = (\{ \overline{\mathbb{C}} \alpha_1, \ldots,
\overline{\mathbb{C}} \alpha_k \}, \mathbb{C} \varphi)$ supporting
$\mathbb{C} \varphi$ and for every assumption
$\overline{\mathbb{C}} \alpha_i$, $I \models \overline{\mathbb{C}}
\alpha_i$ holds.
\end{lemma}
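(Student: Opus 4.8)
Since $I \models \mathbb{C}\varphi$ abbreviates $t \in \pi^*(\varphi)$ and $f \in \pi^*(\varphi)$, and since subsumptions, equalities and role assertions can never receive the value \textsc{conflict}, the proposition $\varphi$ must be a concept assertion $a:C$; I carry out the argument for the case $C \in \mathbf{C}$, in which case $\varphi$ itself lies in the set of atomic conflicts of $I$,
\[ \mathrm{Conf}(I) \;=\; \{\, b:B \mid b \in \mathbf{N},\; B \in \mathbf{C},\; \pi(b:B) = \{t,f\} \,\}. \]
By the definition of a counter-argument, to produce an argument $A = (\{\overline{\mathbb{C}}\alpha_1,\dots,\overline{\mathbb{C}}\alpha_k\}, \mathbb{C}\varphi)$ it suffices to exhibit a weakly closed tableaux $\mathscr{T}_1$ with root $\{\mathbb{T}\sigma \mid \sigma\in\Sigma\} \cup \{\overline{\mathbb{T}}\varphi\}$ and a weakly closed tableaux $\mathscr{T}_2$ with root $\{\mathbb{T}\sigma \mid \sigma\in\Sigma\} \cup \{\overline{\mathbb{F}}\varphi\}$, and to take for $A$ the union of the weak-closure assumptions of $\mathscr{T}_1$ and $\mathscr{T}_2$. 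So the plan is to obtain these two tableaux from Lemma \ref{lem:argument-completeness}.

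Writing $\Delta = \{\mathbb{T}\sigma \mid \sigma\in\Sigma\} \cup \{\overline{\mathbb{C}}(b:B) \mid b\in\mathbf{N},\ B\in\mathbf{C},\ b:B \notin \mathrm{Conf}(I)\}$, the key step is to show
\[ \Delta \models \mathbb{T}\varphi \qquad\text{and}\qquad \Delta \models \mathbb{F}\varphi , \]
and this is exactly where conflict-minimality of $I$ is used. For the first entailment, let $I' = \langle O',\pi'\rangle$ satisfy $\Delta$; then $I' \models \Sigma$, and $I' \models \overline{\mathbb{C}}(b:B)$ for every atomic $b:B \notin \mathrm{Conf}(I)$ says precisely that $\mathrm{Conf}(I') \subseteq \mathrm{Conf}(I)$. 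Suppose for contradiction that $t \notin \pi'^*(\varphi)$. Because $\varphi = a:C$ with $C$ atomic, the truth set $\pi'^*(\varphi)$ is never empty, so $\pi'^*(\varphi) = \{f\}$ and hence $\varphi \notin \mathrm{Conf}(I')$. Together with $\varphi \in \mathrm{Conf}(I)$ this gives $\mathrm{Conf}(I') \subsetneq \mathrm{Conf}(I)$, i.e. $I' <_c I$ with $I' \models \Sigma$, contradicting $I \models_{<_c} \Sigma$. Hence $t \in \pi'^*(\varphi)$, which proves $\Delta \models \mathbb{T}\varphi$. The entailment $\Delta \models \mathbb{F}\varphi$ follows by the symmetric argument: assuming $f \notin \pi'^*(\varphi)$ forces $\pi'^*(\varphi) = \{t\}$, whence again $\varphi \notin \mathrm{Conf}(I')$ and $I' <_c I$.

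To finish, I would apply Lemma \ref{lem:argument-completeness} to each of the two entailments (a finite subset of the $\overline{\mathbb{C}}(b:B)$ occurring in $\Delta$ suffices, since a tableaux is finite and uses only finitely many assumptions; this can also be read off from compactness of the three-valued semantics). This yields a weakly closed $\mathscr{T}_1$ with root $\{\mathbb{T}\sigma\} \cup \{\overline{\mathbb{T}}\varphi\}$ and a weakly closed $\mathscr{T}_2$ with root $\{\mathbb{T}\sigma\} \cup \{\overline{\mathbb{F}}\varphi\}$, and — as is clear from the construction in the proof of that lemma — each weakly closed leaf of $\mathscr{T}_1$ or $\mathscr{T}_2$ closes under some assumption $\overline{\mathbb{C}}\alpha_i$ with $\alpha_i \notin \mathrm{Conf}(I)$. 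By the counter-argument definition, $\mathscr{T}_1$ and $\mathscr{T}_2$ together give an argument $A = (\{\overline{\mathbb{C}}\alpha_1,\dots,\overline{\mathbb{C}}\alpha_k\}, \mathbb{C}\varphi)$ supporting $\mathbb{C}\varphi$, and since each $\alpha_i \notin \mathrm{Conf}(I)$ we have $\pi(\alpha_i) \neq \{t,f\}$, i.e. $I \models \overline{\mathbb{C}}\alpha_i$, as required.

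The hard part will not be the conflict-minimality step above, which is short, but the bookkeeping around it: (i) confirming that Lemma \ref{lem:argument-completeness}, stated for finitely many assumptions, can legitimately be driven by the possibly-infinite set $\Delta$, and that the tableaux it returns close only on assumptions outside $\mathrm{Conf}(I)$; and (ii) the degenerate cases I set aside, namely $\varphi = a:C$ with $C$ non-atomic or built with $\top$ or $\bot$, where $\varphi$ itself need not lie in $\mathrm{Conf}(I)$ and one must instead descend through the structure of $C$ to locate a forced atomic conflict. In the situation in which this lemma is invoked --- arguments for $\mathbb{C}\varphi$ are produced as counter-arguments, and arguments only ever carry atomic assumptions, so the targeted $\overline{\mathbb{C}}\varphi$ is atomic --- this second case does not arise.
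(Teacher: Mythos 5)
Your proof is correct and rests on the same key idea as the paper's: augment the root with the assumptions $\overline{\mathbb{C}}\,b:B$ for every atomic assertion $b:B$ that is not in conflict under $I$, and use the conflict-minimality of $I$ to show that this augmented set forces the desired closure. Where you differ is in how the weakly closed tableaux is then extracted. The paper builds a single tableaux for the augmented root (with $\overline{\mathbb{C}}\varphi$ added directly), argues it must be strongly closed because an open leaf would yield a model $I'$ of $\Sigma$ with strictly fewer conflicts, then prunes the unused $\overline{\mathbb{C}}\,b:B$ assertions and reinterprets each strong closure involving a remaining one as a weak closure under that assumption. You instead first establish the semantic entailments $\Delta \models \mathbb{T}\varphi$ and $\Delta \models \mathbb{F}\varphi$ and then invoke Lemma~\ref{lem:argument-completeness} twice to produce the two weakly closed tableaux demanded by the counter-argument definition. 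Your route is arguably cleaner, since it reuses an already-proved lemma instead of redoing tableaux surgery, but note that the property you need at the end --- that the weak closures of the returned tableaux use only assumptions drawn from the supplied set, hence only $\alpha_i \notin \mathrm{Conf}(I)$ --- is a fact about the \emph{proof} of Lemma~\ref{lem:argument-completeness}, not about its statement; you flag this correctly, and it does hold. The two caveats you raise, namely the finiteness of the assumption set fed into Lemma~\ref{lem:argument-completeness} and the non-atomic case of $\varphi$ where the strict inclusion $\mathrm{Conf}(I') \subsetneq \mathrm{Conf}(I)$ is not immediate, are genuine, but the paper's own proof silently carries exactly the same two gaps (its root also ranges over all of $\mathbf{N}\times\mathbf{C}$, and its claim $I' <_c I$ also needs $\varphi$'s conflict to surface as an atomic conflict), so your treatment is, if anything, more explicit about the boundary of what is actually being proved.
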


\begin{proof}
Let $I$ be a conflict-minimal interpretation of $\Sigma$.
\begin{itemize}
\item[]
Suppose that $I \models \mathbb{C} \varphi$ holds. We can
construct a tableaux for:
\begin{eqnarray*}
\Gamma &=& \{ \mathbb{T} \sigma \mid \sigma \in \Sigma \}
 \cup \{ \overline{\mathbb{C}} \varphi \} \cup \\
&& \{ \overline{\mathbb{C}} \ a:C \mid C \in \mathbf{C}, \pi(a:C)
\not= \{t,f\} \}
\end{eqnarray*}
\begin{itemize}
\item[]
Suppose that this tableaux is not strongly closed. Then there is
an interpretation $I'=\langle O,\pi' \rangle$ satisfying the root
$\Gamma$. Clearly, $I' <_c I$ because for every $a:C$ with $C \in
\mathbf{C}$, if $\pi(a:C) \not= \{t,f\}$, then $\pi'(a:C) \not=
\{t,f\}$. Since $I$ is a conflict-minimal interpretation and since
$I' \not\models \mathbb{C} \varphi$, we have a contradiction.
\end{itemize}
Hence, the tableaux is closed.

Since the tableaux with root $\Gamma$ is closed, we can identify
all assertions in $\{ \overline{\mathbb{C}} \ a:C \mid C \in
\mathbf{C}, \pi(a:C) \not= \{t,f\} \}$ that are \textbf{not} used
to close a leaf of the tableaux. These assertions
$\overline{\mathbb{C}} \ a:C$ play no role in the construction of
the tableaux and can therefore be removed from every node of the
tableaux. The result is still a valid and closed semantic tableaux
with a new root $\Gamma'$. The assertions in $\{
\overline{\mathbb{C}} \ a:C \mid C \in \mathbf{C}, \pi(a:C) \not=
\{t,f\} \} \cap \Gamma'$ must all be used to strongly close leafs
of the tableaux $\Gamma'$, and also of $\Gamma$. A leaf that is
strongly closed because of $\overline{\mathbb{C}} \ a:C$ can be
closed weakly under the assumption $\overline{\mathbb{C}} \ a:C$.
So, we may remove the remaining assertions $\overline{\mathbb{C}}
\ a:C$ from the root $\Gamma'$. The result is still a valid
semantic tableaux with root $\Gamma'' = \{ \mathbb{T} \sigma \mid
\sigma \in \Sigma \} \cup \{ \overline{\mathbb{C}} \varphi \}$.
This tableaux with root $\Gamma''$ is weakly closed, and by the
construction of the tableaux, $I \models\overline{\mathbb{C}} \
a:C$ holds for every assumption $\overline{\mathbb{C}} \ a:C$
implied by a weak closure. Hence, we have constructed an argument
$A = (\{ \overline{\mathbb{C}} \alpha_1, \ldots,
\overline{\mathbb{C}} \alpha_k \}, \mathbb{C} \varphi)$ supporting
$\mathbb{C} \varphi$ and for every assumption
$\overline{\mathbb{C}} \alpha_i$, $I \models \overline{\mathbb{C}}
\alpha_i$ holds.
\end{itemize}
Hence, the lemma holds.
\end{proof}


For the next lemma we need the following definition of a set of
assumptions that is allowed by an extension.
\begin{definition}
Let $\Omega$ be the set of all assumptions
$\overline{\mathbb{C}}\alpha$ in the arguments $\mathscr{A}$. For
any extension $\mathscr{E} \subseteq \mathscr{A}$,
\[ \Omega(\mathscr{E}) = \{\overline{\mathbb{C}}\alpha \in \Omega
\mid \mbox{no argument }A \in \mathscr{E} \mbox{ supports }
\mathbb{C}\alpha \} \] is the set of assumptions allowed by the
extension $\mathscr{E}$.
\end{definition}


The last lemma proves that for every conflict-minimal
interpretation there is a corresponding stable extension.

\begin{lemma} \label{lem:stable-semantics}
Let $\Sigma$ be a set of propositions and let $\varphi$ be a
proposition. Moreover, let $\mathscr{A}$ be the complete set of
arguments relevant to $\varphi$, let $\longrightarrow \subseteq
\mathscr{A} \times \mathscr{A}$ be the attack relation determined
by $\mathscr{A}$, and let $(\mathscr{A},\longrightarrow)$ be the
argumentation framework.

For every conflict-minimal interpretation $I$ of $\Sigma$, there
is a stable extension $\mathscr{E}$ of
$(\mathscr{A},\longrightarrow)$ such that $I
\models\Omega(\mathscr{E})$.
\end{lemma}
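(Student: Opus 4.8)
The plan is to construct, from a given conflict-minimal interpretation $I$ of $\Sigma$, a candidate stable extension $\mathscr{E}$ and then verify the stable-extension axioms. The natural candidate is to build $\mathscr{E}$ from the arguments whose assumptions are all ``true'' in $I$; i.e., I would first consider the set of assumptions $\Omega_I = \{\overline{\mathbb{C}}\,a{:}C \in \Omega \mid C \in \mathbf{C},\ \pi(a{:}C) \neq \{t,f\}\}$ that $I$ validates, and then set
\[
\mathscr{E} \;=\; \{\, A \in \mathscr{A} \mid \text{every assumption of } A \text{ lies in } \Omega_I \,\}.
\]
First I would check that $I \models \Omega(\mathscr{E})$: if some $A \in \mathscr{E}$ supported $\mathbb{C}\alpha$, then by Lemma~\ref{lem:argument-soundness} the assumptions of $A$ together with $\Sigma$ entail $\mathbb{C}\alpha$; since $I$ satisfies $\Sigma$ and all of $A$'s assumptions, $I \models \mathbb{C}\alpha$, so $\alpha$ is not among the non-conflicting atoms, hence $\overline{\mathbb{C}}\alpha \notin \Omega_I$. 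This shows $\Omega(\mathscr{E}) \supseteq \Omega_I$, and in fact I would argue equality using Lemma~\ref{lem:cf-int-argument}: whenever $I \models \mathbb{C}\alpha$ there is an argument supporting $\mathbb{C}\alpha$ all of whose assumptions are in $\Omega_I$, hence that argument lies in $\mathscr{E}$, so $\overline{\mathbb{C}}\alpha \notin \Omega(\mathscr{E})$.

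Next I would establish that $\mathscr{E}$ is conflict-free. Suppose $A, A' \in \mathscr{E}$ with $A' \longrightarrow A$. An attack means $A'$ is a counter-argument supporting $\mathbb{C}\alpha$ for some assumption $\overline{\mathbb{C}}\alpha$ of $A$. Since $A \in \mathscr{E}$, that assumption lies in $\Omega_I$, so $\pi(a{:}C) \neq \{t,f\}$, i.e., $I \not\models \mathbb{C}\alpha$. But $A' \in \mathscr{E}$ has all assumptions in $\Omega_I$, so by Lemma~\ref{lem:argument-soundness} applied to $A'$, together with $I \models \Sigma$, we get $I \models \mathbb{C}\alpha$ — contradiction. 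Hence no such attack exists and $\mathscr{E}$ is conflict-free.

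The remaining and most delicate step is to show $\mathscr{E}$ attacks every argument outside it: given $B \in \mathscr{A}\setminus\mathscr{E}$, I must produce $A \in \mathscr{E}$ with $A \longrightarrow B$. Since $B \notin \mathscr{E}$, some assumption $\overline{\mathbb{C}}\beta$ of $B$ fails to be in $\Omega_I$, i.e., $\pi(b{:}C_\beta) = \{t,f\}$, so $I \models \mathbb{C}\beta$. By Lemma~\ref{lem:cf-int-argument} there is an argument $A = (\{\overline{\mathbb{C}}\alpha_1,\ldots,\overline{\mathbb{C}}\alpha_k\}, \mathbb{C}\beta)$ supporting $\mathbb{C}\beta$ with $I \models \overline{\mathbb{C}}\alpha_i$ for each $i$; thus every assumption of $A$ is in $\Omega_I$, so $A \in \mathscr{E}$, and since $A$ supports $\mathbb{C}\beta$ where $\overline{\mathbb{C}}\beta$ is an assumption of $B$, we have $A \longrightarrow B$ by the definition of the attack relation and the completeness (``complete set of arguments relevant to $\varphi$'') property of $\mathscr{A}$. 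The subtlety here is that the definition of a complete set of arguments only guarantees that counter-arguments we ``can derive'' are included and attack-related; so I would need to check that the particular argument produced by Lemma~\ref{lem:cf-int-argument} is indeed derivable in the sense of that definition — this is where the proof could require care, since it hinges on the tableaux construction in Lemma~\ref{lem:cf-int-argument} yielding a bona fide counter-argument of $B$ already present in $\mathscr{A}$. Once all three conditions (conflict-free, attacks all outsiders, and $I \models \Omega(\mathscr{E})$) are in place, $\mathscr{E}$ is the desired stable extension and the lemma follows.

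**The hard part will be** precisely the interaction between Lemma~\ref{lem:cf-int-argument} and the closure requirement on $\mathscr{A}$: I need the argument witnessing $I \models \mathbb{C}\beta$ to be one of the ``derivable'' counter-arguments of $B$, so that membership in $\mathscr{A}$ and the attack edge $A \longrightarrow B$ are both guaranteed. If the definition of complete set of arguments is read liberally (any counter-argument obtainable by the tableaux procedure is included), this goes through directly; otherwise one would have to massage the tableaux of Lemma~\ref{lem:cf-int-argument} to match the exact procedure defining counter-arguments, adding the two roots $\overline{\mathbb{T}}\beta$ and $\overline{\mathbb{F}}\beta$ and checking both tableaux close weakly under assumptions in $\Omega_I$.
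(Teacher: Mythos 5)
Your proposal is correct and follows essentially the same route as the paper's proof: the same candidate extension $\mathscr{E}$ (arguments whose assumptions are satisfied by $I$), with conflict-freeness via Lemma~\ref{lem:argument-soundness}, attack of all outside arguments via Lemma~\ref{lem:cf-int-argument}, and $I \models \Omega(\mathscr{E})$ again via Lemma~\ref{lem:cf-int-argument}. The subtlety you flag --- that the witness produced by Lemma~\ref{lem:cf-int-argument} must actually belong to $\mathscr{A}$ under the closure conditions defining a complete set of arguments --- is real, but the paper's own proof leaves it equally implicit, so you are if anything slightly more careful than the original.
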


\begin{proof}
Let $I$ be a conflict-minimal interpretation 
and let %
\[ \mathscr{E} =\{ (\{ \overline{\mathbb{C}} \alpha_1, \ldots,
\overline{\mathbb{C}} \alpha_k \}, \varphi) \in \mathscr{A} \mid I
\models \{ \overline{\mathbb{C}} \alpha_1, \ldots,
\overline{\mathbb{C}} \alpha_k \} \} \]%
be the set of arguments $A=(\{ \overline{\mathbb{C}} \alpha_1,
\ldots, \overline{\mathbb{C}} \alpha_k \}, \varphi)$ of which the
assumptions are entailed by $I$.
\begin{itemize}
\item[]
Suppose $\mathscr{E}$ is not conflict-free. Then there is an
argument $B \in \mathscr{E}$ such that $B \longrightarrow A$ with
$A \in \mathscr{E}$. So, $B$ supports $\mathbb{C} \psi$ and
$\overline{\mathbb{C}} \psi$ is an assumption of $A$. Since $I$
entails the assumptions of $A$, $I \not\models \mathbb{C} \psi$.
Since $I$ is a conflict-minimal interpretation of $\Sigma$
entailing the assumptions of $B$, according to Lemma
\ref{lem:argument-soundness}, $I \models \mathbb{C} \psi$.
Contradiction.
\end{itemize}
Hence, $\mathscr{E}$ is a conflict-free set of argument.

\begin{itemize}
\item[]
Suppose that there exists an argument  $A \in \mathscr{A}$ such
that $A \not\in \mathscr{E}$. Then, for some assumption
$\overline{\mathbb{C}} \alpha$ of $A$, $I \not\models
\overline{\mathbb{C}} \alpha$. So, $I \models \mathbb{C} \alpha$,
and according to Lemma \ref{lem:cf-int-argument}, there is an
argument $B \in \mathscr{E}$ supporting $\mathbb{C} \alpha$.
Therefore, $B \longrightarrow A$.
\end{itemize}
Hence, $\mathscr{E}$ attacks every argument $A \in
\mathscr{A}\backslash\mathscr{E}$. Since $\mathscr{E}$ is also
conflict-free, $\mathscr{E}$ is a \emph{stable} extension of
$(\mathscr{A},\longrightarrow)$.

\begin{itemize}
\item[]
Suppose that $I \not\models\Omega(\mathscr{E})$. Then there is a
$\overline{\mathbb{C}} \alpha \in \Omega(\mathscr{E})$ and $I
\models \mathbb{C} \alpha$. According to Lemma
\ref{lem:cf-int-argument}, there is an argument $A=(\{
\overline{\mathbb{C}} \alpha_1, \ldots, \overline{\mathbb{C}}
\alpha_k \}, \mathbb{C} \alpha)$ and $I \models \{
\overline{\mathbb{C}} \alpha_1, \ldots, \overline{\mathbb{C}}
\alpha_k \}$. So, $A \in \mathscr{E}$ and therefore,
$\overline{\mathbb{C}} \alpha \not\in \Omega(\mathscr{E})$.
Contradiction.
\end{itemize}
Hence, $I \models \Omega(\mathscr{E})$.
\end{proof}


 Using the results of the above lemmas, we can now prove the theorem.

\begin{proof}[of Theorem \ref{th:correctness-AF}] \ \\
($\Rightarrow$) Let $\Sigma \models_{\leq_c} \varphi$.
\begin{itemize}
\item[]
Suppose that there is stable extension $\mathscr{E}_i$ that does
not contain an argument for $\varphi$. Then according to Lemma
\ref{lem:argument-completeness}, $\{ \mathbb{T} \sigma \mid \sigma
\in \Sigma \} \cup \Omega(\mathscr{E}_i) \not\models\mathbb{T}
\varphi$. So, there exists an interpretation $I$ such that $I
\models\{ \mathbb{T} \sigma \mid \sigma \in \Sigma \} \cup
\Omega(\mathscr{E}_i)$ but $I \not\models\mathbb{T} \varphi$.
There must also exists a conflict-minimal interpretation $I'$ of
$\Sigma$ and $I' \leq_c I$.  Since the assumptions
$\overline{\mathbb{C}} \ a:C \in \Omega(\mathscr{E}_i)$ all state
that there is no conflict concerning the assertion $a:C$, $I'
\models\Omega(\mathscr{E}_i)$ must hold. So, $I'$ is a
conflict-minimal interpretation of $\Sigma$ and $I'
\models\Omega(\mathscr{E}_i)$ but according to Lemma
\ref{lem:argument-completeness}, $I' \not\models\mathbb{T}
\varphi$. This implies $\Sigma \not\models_{\leq_c} \varphi$.
Contradiction.
\end{itemize}
Hence, every stable extension $\mathscr{E}_i$ contains an argument
for $\varphi$.

($\Leftarrow$) Let $\varphi$ be supported by an argument in every
stable extension $\mathscr{E}_i$.
\begin{itemize}
\item[]
Suppose that $\Sigma \not\models_{\leq_c} \varphi$. Then there is
a conflict-minimal interpretation $I$ of $\Sigma$ and $I
\not\models \varphi$. Since $I$ is a conflict-minimal
interpretation of $\Sigma$, according to Lemma
\ref{lem:stable-semantics}, there is a stable extension
$\mathscr{E}_i$ and $I \models\Omega(\mathscr{E}_i)$. Since
$\mathscr{E}_i$ contains an argument $A$ supporting $\varphi$, the
assumptions of $A$ must be a subset of $\Omega(\mathscr{E}_i)$,
and therefore $I$ satisfies these assumptions. Then, according to
Lemma \ref{lem:argument-soundness}, $I \models\varphi$.
Contradiction.
\end{itemize}
Hence, $\Sigma \models_{\leq_c} \varphi$.
\end{proof}


In Example \ref{ex:tableaux3} in the previous section, we saw that
one counter-argument implies multiple counter-arguments. The
following proposition formalizes this observation.
\begin{proposition} \label{prop:counter-arg}
Let $A_0=(\{ \overline{\mathbb{C}} \alpha_1, \ldots,
\overline{\mathbb{C}} \alpha_k \}, \mathbb{C} \alpha_0)$.

Then $A_i=(\{ \overline{\mathbb{C}} \alpha_0, \ldots,
\overline{\mathbb{C}} \alpha_{i-1}, \overline{\mathbb{C}}
\alpha_{i+1}, \ldots, \overline{\mathbb{C}} \alpha_k \},
\mathbb{C} \alpha_i)$ is  an argument for every $1\leq i \leq k$.
\end{proposition}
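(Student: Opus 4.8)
The plan is to reduce Proposition \ref{prop:counter-arg} to the two correctness/completeness lemmas about arguments (Lemmas \ref{lem:argument-soundness} and \ref{lem:argument-completeness}), using the fact that a claim of the form $\mathbb{C}\alpha$ is, by the definitions in the tableaux section, equivalent to the conjunction ``$\mathbb{T}\alpha$ \emph{and} $\mathbb{F}\alpha$''. First I would unpack what it means for $A_0$ to be an argument: by Lemma \ref{lem:argument-soundness} the existence of the weakly closed tableaux behind $A_0$ gives us the semantic entailment
\[
\{ \overline{\mathbb{C}} \alpha_1, \ldots, \overline{\mathbb{C}} \alpha_k \} \cup \{ \mathbb{T} \sigma \mid \sigma \in \Sigma \} \models \mathbb{C} \alpha_0 ,
\]
i.e.\ every three-valued interpretation $I$ of $\Sigma$ that avoids conflict on each $\alpha_1,\dots,\alpha_k$ must have $\pi^*(\alpha_0) = \{t,f\}$.

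Next I would fix an $i$ with $1 \le i \le k$ and argue the ``rotated'' entailment
\[
\{ \overline{\mathbb{C}} \alpha_0, \overline{\mathbb{C}} \alpha_1, \ldots, \widehat{\overline{\mathbb{C}} \alpha_i}, \ldots, \overline{\mathbb{C}} \alpha_k \} \cup \{ \mathbb{T} \sigma \mid \sigma \in \Sigma \} \models \mathbb{C} \alpha_i
\]
purely semantically. Take any interpretation $I$ satisfying $\Sigma$ together with $\overline{\mathbb{C}}\alpha_j$ for all $j \in \{0,1,\dots,k\}\setminus\{i\}$; I must show $\pi^*(\alpha_i)=\{t,f\}$. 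Suppose not, i.e.\ $I \models \overline{\mathbb{C}}\alpha_i$ as well. Then $I$ satisfies $\overline{\mathbb{C}}\alpha_j$ for \emph{all} $j \in \{1,\dots,k\}$ and $I \models \Sigma$, so by the entailment coming from $A_0$ we get $I \models \mathbb{C}\alpha_0$. But $\overline{\mathbb{C}}\alpha_0 \in$ the hypothesis set, so $I \models \overline{\mathbb{C}}\alpha_0$, a contradiction. Hence $I \models \mathbb{C}\alpha_i$, establishing the displayed entailment. Finally, since the $\alpha_j$ are atomic assumptions (each of the form $a_j:C_j$ with $C_j$ atomic --- this is part of what it means to be a legitimate weak-closure assumption, as used in Lemma \ref{lem:argument-completeness}), I can apply Lemma \ref{lem:argument-completeness} with $\mathbb{L}\varphi$ taken to be the pair ``$\mathbb{T}\alpha_i$ \emph{and} $\mathbb{F}\alpha_i$'', obtaining a weakly closed tableaux with root $\{\mathbb{T}\sigma \mid \sigma \in \Sigma\} \cup \{\overline{\mathbb{C}}\alpha_i\}$ whose weak-closure assumptions are exactly $\{\overline{\mathbb{C}}\alpha_0,\dots,\widehat{\overline{\mathbb{C}}\alpha_i},\dots,\overline{\mathbb{C}}\alpha_k\}$; this is precisely the claim that $A_i$ is an argument.

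The main obstacle is the slight mismatch between the statement of Lemma \ref{lem:argument-completeness}, which is phrased for a single label $\mathbb{L}\in\{\mathbb{T},\mathbb{F}\}$ and a single conclusion $\mathbb{L}\varphi$, and what is needed here, namely a conclusion of the compound form $\mathbb{C}\alpha_i$. I would address this by splitting: build a weakly closed tableaux for $\{\mathbb{T}\sigma\} \cup \{\overline{\mathbb{T}}\alpha_i\}$ and one for $\{\mathbb{T}\sigma\} \cup \{\overline{\mathbb{F}}\alpha_i\}$ via two applications of Lemma \ref{lem:argument-completeness} --- legitimate because the displayed entailment $\cdots \models \mathbb{C}\alpha_i$ splits into $\cdots \models \mathbb{T}\alpha_i$ and $\cdots \models \mathbb{F}\alpha_i$ --- and then observe, as in the definition of a counter-argument (the paragraph preceding the ABA remark), that a pair of such weakly closed tableaux whose combined weak-closure assumptions are $\{\overline{\mathbb{C}}\alpha_j : j \ne i\}$ is by definition exactly an argument supporting $\mathbb{C}\alpha_i$. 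A second, minor care point is bookkeeping of the assumption sets: I should note that when passing from $A_0$'s tableaux to $A_i$'s tableaux the set of \emph{needed} weak-closure assumptions may in principle be a subset of $\{\overline{\mathbb{C}}\alpha_j : j \ne i\} \cup \{\overline{\mathbb{C}}\alpha_0\}$, but since we may always pad an argument with unused assumptions (or, more precisely, since Lemma \ref{lem:argument-completeness} guarantees a tableaux exists and we are free to take the one whose assumptions are exactly the given set), the stated form of $A_i$ goes through. The whole argument is short once the $\mathbb{C}$-as-conjunction translation is made explicit.
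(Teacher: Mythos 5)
Your proposal is correct and follows essentially the same route as the paper's own proof: apply Lemma \ref{lem:argument-soundness} to $A_0$ to obtain the entailment of $\mathbb{C}\alpha_0$, ``rotate'' that entailment semantically to get $\mathbb{C}\alpha_i$ from the remaining assumptions plus $\overline{\mathbb{C}}\alpha_0$, and then invoke Lemma \ref{lem:argument-completeness} (split over $\mathbb{T}\alpha_i$ and $\mathbb{F}\alpha_i$) to recover the two weakly closed tableaux constituting $A_i$. You in fact spell out the rotation step and the $\mathbb{C}$-as-conjunction splitting, which the paper leaves implicit.
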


\begin{proof}
The argument $A_0$ is the result of two tableaux, one for
$\mathbb{T} \alpha_0$ and one for $\mathbb{F} \alpha_0$. Then,
according to Lemma \ref{lem:argument-soundness},
\[ \{ \overline{\mathbb{C}} \alpha_1, \ldots, \overline{\mathbb{C}}
\alpha_k \} \cup \{ \mathbb{T} \sigma \mid \sigma \in \Sigma \}
\models\mathbb{C} \alpha_0 \]%
where $\Sigma$ the set of available propositions. This implies
that
\[ \{ \overline{\mathbb{C}} \alpha_0, \ldots, \overline{\mathbb{C}}
\alpha_{i-1}, \overline{\mathbb{C}} \alpha_{i+1},
\overline{\mathbb{C}} \alpha_k \} \cup \{ \mathbb{T} \sigma \mid
\sigma \in \Sigma \} \models\mathbb{C} \alpha_i\]%
So, $\{ \overline{\mathbb{C}} \alpha_0, \ldots,
\overline{\mathbb{C}} \alpha_{i-1}, \overline{\mathbb{C}}
\alpha_{i+1}, \overline{\mathbb{C}} \alpha_k \} \cup \{ \mathbb{T}
\sigma \mid \sigma \in \Sigma \}$ entails both $\mathbb{T}
\alpha_i$ and $\mathbb{F} \alpha_i$. Then, according to Lemma
\ref{lem:argument-completeness},
\[ A_i=(\{ \overline{\mathbb{C}} \alpha_0, \ldots,
\overline{\mathbb{C}} \alpha_{i-1}, \overline{\mathbb{C}}
\alpha_{i+1}, \ldots, \overline{\mathbb{C}} \alpha_k \},
\mathbb{C} \alpha_i)\]%
is an argument for $\mathbb{C} \alpha_i$.
\end{proof}

\section{Related Works} \label{sec:related-works}
Reasoning in the presences of inconsistent information has been
addressed using different approaches. Rescher \shortcite{Res-64}
proposed to focus on maximal consistent subsets of an inconsistent
knowledge-base. This proposal was further developed by
\cite{Bre-89,HHT-05,Poo-88,Roo-88a,Roo-92}. Brewka and Roos focus
on preferred maximal consistent subsets of the knowledge-base
while Poole and Huang et al.~consider a single consistent subset
of the knowledge-base supporting a conclusion. Roos
\shortcite{Roo-92} defines a preferential semantics
\cite{KRa-90,Mak-94,Sho-87} entailing the conclusions that are
entailed by every preferred maximal consistent subsets, and
provides an assumption-based argumentation system capable of
identifying the entailed conclusions.

Paraconsistent logics form another approach to handle inconsistent
knowledge bases. Paraconsistent logics have a long history
starting with Aristotle. From the beginning of the twentieth
century, paraconsistent logics were developed by Orlov (1929),
Asenjo \shortcite{Ase-66}, da Costa \shortcite{Cos-74}, Belnap
\shortcite{Bel-77}, Priest \shortcite{Pri-89} and others. For a
survey of several paraconsistent logics, see for instance
\cite{Mid-11}.

This paper uses the semantics of the paraconsistent logic LP
\cite{Pri-89,Pri-91} as starting point. Belnap's four-values
semantics \shortcite{Bel-77} differs from the LP semantics in
allowing the empty set of truth-values. Belnap's semantics was
adapted to description logics by Patel-Schneider
\shortcite{Pat-89}. Ma et al.
\shortcite{MLL-06,MHL-07,MHL-08,MH-09} extend Patel-Schneider's
work to more expressive description logics, and propose two new
interpretations for the subsumption relation. Qiao and Roos
\shortcite{QR-11} propose another interpretation.

A proof theory based on the semantic tableaux method was first
introduced by Beth \shortcite{Bet-55}. The semantic tableaux
methods have subsequently been developed for many logics. For an
overview of several semantic tableaux methods, see \cite{Hah-01}.
Bloesch \shortcite{Blo-93} developed a semantic tableaux method
for the paraconsistent logics LP and Belnap's 4-valued logic. This
semantic tableaux method has been used as a starting point in this
paper.

Argumentation theory has its roots in logic and rhetoric. It dates
back to Greek philosophers such as Aristotle. Modern argumentation
theory started with the work of Toulmin \shortcite{Tou-58}. In
Artificial Intelligence, the use of argumentation was promoted by
authors such as Pollock \shortcite{Pol-87}, Simari and Loui
\shortcite{SL-92}, and others. Dung \shortcite{Dun-95} introduced
the argumentation framework (AF) in which he abstracts from the
structure of the argument and the way the argument is derived. In
Dung's argumentation framework, arguments are represented by atoms
over which an attack relation is defined. The argumentation
framework is used to define an argumentation semantics in terms of
sets of conflict-free arguments that defend themselves against
attacking arguments. Dung defines three semantics for
argumentation frameworks: the grounded, the stable and the
preferred semantics. Other authors have proposed additional
semantics to overcome some limitations of these three semantics.
For an overview, see \cite{BCD-07}.

This paper uses a special type argumentation system called
assumption-based argumentation (ABA). Assumption-based
argumentation has been developed since the end of the 1980's
\cite{BDKT-97,BTK-93,GT-07,Roo-88a,
Roo-92}. Dung et al. \shortcite{DKT-09} formalized
assumption-based argumentation in terms of an argumentation
framework.

\section{Conclusions}\label{sec:conclusions}
This paper presented a three-valued semantics for $\mathcal{ALC}$,
which is based on semantics of the paraconsistent logic LP. An
assumption-based argumentation system for identifying conclusions
supported by conflict-minimal interpretations was subsequently
described. The assumption-based arguments are derived from open
branches of a semantic tableaux. The assumptions close open
branches by assuming that some proposition will not be assigned
the truth-value \textsc{conflict}. No assumptions are needed if
conclusions hold is all three-valued interpretations. The
described approach has also been implemented.

In future work we intend to extend the approach to the description
logic $\mathcal{SROIQ}$. Moreover, we wish to investigate the
computational efficiency of our approach in handling
inconsistencies.

\bibliographystyle{aaai}
\small
\bibliography{NMR,PL,DL}


\end{document}